\newcommand{\argmax}{\operatornamewithlimits{argmax}}
\newcommand{\argmin}{\operatornamewithlimits{argmin}}
\newcommand{\E}{\mathbb{E}}
\newcommand{\I}{\mathbb{I}}
\newcommand{\Com}{\mathrm{Com}}
\newcommand{\Cost}{\mathrm{Cost}}
\newcommand{\rad}{\mathrm{rad}}
\theoremstyle{plain}
\newtheorem{theorem}{Theorem}[section]
\newtheorem{lemma}[theorem]{Lemma}
\theoremstyle{definition}
\theoremstyle{remark}
\newtheorem{remark}[theorem]{Remark}
\gdef\@copyrightpermission{
  \begin{minipage}{0.2\columnwidth}
   \href{https://creativecommons.org/licenses/by/4.0/}{\includegraphics[width=0.90\textwidth]{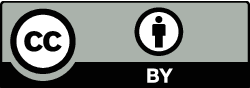}}
  \end{minipage}\hfill
  \begin{minipage}{0.8\columnwidth}
   \href{https://creativecommons.org/licenses/by/4.0/}{This work is licensed under a Creative Commons Attribution International 4.0 License.}
  \end{minipage}
  \vspace{5pt}
}
\title[AAMAS-2025 Formatting Instructions]{Learning with Limited Shared Information in Multi-agent Multi-armed Bandit}
\author{Junning Shao}
\affiliation{
  \institution{Tsinghua University}
  \city{Beijing}
  \country{China}}
\affiliation{
  \institution{Shanghai Qi Zhi Institute}
  \city{Shanghai}
  \country{China}}
\email{sjn21@mails.tsinghua.edu.cn}
\author{Siwei Wang}
\affiliation{
  \institution{Microsoft Research}
  \city{Beijing}
  \country{China}}
\email{siweiwang@microsoft.com}
\author{Zhixuan Fang}\thanks{Corresponding authors: Siwei Wang (\texttt{siweiwang@microsoft.com}), Zhixuan Fang (\texttt{zfang@mail.tsinghua.edu.cn}).}
\affiliation{
  \institution{Tsinghua University}
  \city{Beijing}
  \country{China}}
\affiliation{
  \institution{Shanghai Qi Zhi Institute}
  \city{Shanghai}
  \country{China}}
\email{zfang@mail.tsinghua.edu.cn}
\begin{abstract}
Multi-agent multi-armed bandit (MAMAB) is a classic collaborative learning model and has gained much attention in recent years. However, existing studies do not consider the case where an agent may refuse to share all her information with others, e.g., when some of the data contains personal privacy. 
In this paper, we propose a novel limited shared information multi-agent multi-armed bandit (LSI-MAMAB) model in which each agent only shares the information that she is willing to share, and propose the Balanced-ETC algorithm to help multiple agents collaborate efficiently with limited shared information. 
Our analysis shows that Balanced-ETC is asymptotically optimal and its average regret (on each agent) approaches a constant when there are sufficient agents involved. 
Moreover, to encourage agents to participate in this collaborative learning, an incentive mechanism is proposed to make sure each agent can benefit from the collaboration system. Finally, we present experimental results to validate our theoretical results.
\end{abstract}
\keywords{Multi-armed bandit, Multi-agent collaboration, Incentive mechanism}
\newcommand{\BibTeX}{\rm B\kern-.05em{\sc i\kern-.025em b}\kern-.08em\TeX}
\begin{document}


\pagestyle{fancy}
\fancyhead{}


\maketitle 


\section{Introduction}
Multi-armed bandit (MAB) problem is a fundamental theoretical model and has been studied for decades.
There are many practical applications of multi-armed bandit algorithms in industry, especially in on-line platforms \cite{lattimore2020bandit}. 
In a standard MAB game with $N$ arms and time horizon $T$, a single player needs to choose one arm to pull in each time slot, and pulling different arms results in different expected rewards.
Numerous algorithms have been proposed to solve bandit problems, e.g., UCB \cite{auer2002finite, gittins2011multi}, which is a classic algorithm that guarantees $O(N\log T)$ regret upper bound (which matches the $\Omega(N\log T)$ regret lower bound).

However, the standard MAB setting only focuses on the game with a \emph{single} agent, while many real-world applications face the challenge of \emph{multiple} agents making decisions. 
As a concrete example, on an online shopping platform, when a buyer chooses to purchase a certain product, she will receive a corresponding reward. Hence, the buyer can be regarded as an agent and the product can be regarded as an arm. 
Different from the standard MAB setting with a single player, there are always a large number of buyers choosing products to purchase on the online shopping platform, and collaboration between them could help them learn much faster. 
Therefore, in this paper, we consider the multi-agent multi-armed bandit (MAMAB) model (also called multi-player multi-armed bandit in some literature), which features multiple players playing (and collaborating in) the same instance of an MAB problem together. 
Following previous literature, we focus on the common objective of maximizing the total reward of all agents, i.e., social welfare maximization. This is equivalent to minimizing the total regret of agents in the language of bandit problems. 

Several variants of the MAMAB model have been studied in the existing literature, e.g., letting the agents collaborate to speed up the learning procedure with limited communication (e.g., \cite{agarwal2021multi,shahrampour2017multi,sankararaman2019social}), considering decentralized matching market with multi-armed bandit (e.g., \cite{liu2020competing,zhangmatching,kong2023player}). 
%
%
%
However, these studies do not consider the case where an agent may refuse to share her private information with others,
and may even decide to withdraw from learning if she is forced to share some data that she is not willing to share. 
For example, on an \emph{online shopping platform}, the shared information would be the users' comments 
about the products, which may help other users  make their decisions.
%
%
%
However, a user may not comment on every product she purchases in reality, and she can be reluctant to make comments for many reasons, e.g., because the user regards the comments as her privacy, or because commenting on these products does not directly improve her experiences. 
%
A survey presented in \cite{tsai2006s} indicates that customers are likely to refrain from purchasing certain types of items on online shopping platforms due to privacy concerns. 
This survey reveals that different items often elicit varying levels of sensitivity among users, indicating that there are certain products that are widely accepted for online purchases by the majority of individuals, while there are other products that only a small fraction of people are willing to purchase online.
For example, the study shows that for common products such as office supplies, there is little hesitation about buying them online. However, as the items became more personal or related to sensitive topics such as sex, depression, or adult diapers, hesitation increased. When the items were indicative of behavior that could be associated with criminals or terrorists, such as a book on making bombs and bullets, there was significant reservations and reluctance to purchase online. Furthermore, some studies \cite{tsai2006s, bellman2004international} also find that individuals may have varying privacy considerations for the same product. Certain products are more likely to raise privacy concerns among individuals, but the specific privacy concerns of each person are contingent upon factors such as their values, cultural background, personal experiences, and other related factors. These objective survey findings have motivated us to consider a framework in which users selectively share only a portion of the information they are willing to disclose, while choosing to withhold information they perceive as private.
%
Another example is \emph{federated learning}. In a federated learning framework, it is obligatory to preserve data privacy, and a massive amount of data cannot be shared because of the legal concern. For instance, the EU adopted the General Data 
Protection Regulation (GDPR) \cite{european2016regulation}, which states that any institutions or organizations do not have the authority to share users' data unless they have the permission. 
To the best of our knowledge, we are the first to consider 
this particular structural reality. Under our model, agents are more  willing to participate since he can hold his sensitive information private. 


Given the structure of limited information sharing, the first challenge arises as the shared data may be imbalanced, causing serious bias in data \cite{mehrabi2021survey}. For example, there could be a lot of observed rewards on arm $i$, while only limited observed rewards on arm $j$, since many users are willing to share the information of arm $i$, but only few of them are willing to share the information of arm $j$ (image the case where the data related to arm $j$ is more sensitive).
This may result in a serious over-exploration in bandit problems, leading to an inefficient learning.
%

The second challenge for multi-agent collaboration is how to guarantee individual rationality (IR), i,e, agents can always get non-negative utility from the mechanism \cite{shoham2008multiagent}. For example, consider the extreme case that one agent is willing to share all her information, while all the other agents do not share anything. In this case, the agent that shares information helps the others, but cannot directly benefit from sharing. In this case, she may refuse to join the collaboration, leading to a failure of cooperation.
Thus, it is also important to design mechanisms to ensure that all  agents who join this collaboration system can benefit (i.e., earn more reward than learning alone).

In this paper, we introduce a novel multi-agent multi-armed bandit model named limited shared information multi-agent multi-armed bandit (LSI-MAMAB) to characterize the structure of collaborative learning in reality, 
 in which each agent only shares the information that she is willing to share (e.g., only her received rewards on \emph{some} arms) with the other agents, and only decides to participate in this collaboration when she can benefit from it. 
We solve the above two challenges, and design an algorithm \emph{Balanced-ETC} for the case that each agent only shares the information of some specific arms with the other agents.
On the one hand, the total regret of Balanced-ETC is upper bounded by $O(N\log T)$,  
which is asymptotically the same as the best possible regret that one can do in the trivial case where all the information is shared. This reflects the asymptotic optimality of our algorithm.
On the other hand, with the additional incentive design, Balanced-ETC makes sure that the individual regret of each agent is upper bounded by the regret of running a UCB algorithm in the single-agent setting. This means that our algorithm satisfies individual rationality (IR), i.e., all agents have the motivation to join this collaboration system.

We summarize the main contributions of this paper below:
\begin{itemize}
    \item We propose LSI-MAMAB, a novel multi-agent multi-armed bandit model, in which each agent takes part in the collaboration system by only sharing the information that she is willing to share. 
    This lowers the barrier for agents to enter collaborative learning in practices.
    \item We design the Balanced-ETC algorithm to help multiple agents collaborate efficiently under the constraint that they only share partial information, and prove that its overall regret is $O(N\log T + MN^2)$ ($M$ is the number of agents, $N$ is the number of arms and $T$ is time horizon). This means that the average regret (on each agent) 
    almost approaches a constant when there are sufficiently many agents.
    \item We also design an incentive mechanism in the Balanced-ETC algorithm to encourage agents to participate in collaboration. Analysis shows that the incentive mechanism satisfies IR, i.e.,  for any agent, her individual regret of joining this collaboration system is better than learning as a single agent. 
    
    
\end{itemize}

\section{Related Works}

    \textbf{Multi-agent multi-armed bandit.} Numerous frameworks and algorithms have been proposed to solve various multi-agent multi-armed bandit problems. 
    There are many prior works on different multi-agent multi-armed variants.
    For example, 
    \cite{vial2021robust} considers the setting of including honest and malicious agents who recommend best-arm estimates and arbitrary arms, respectively. \cite{liu2020competing,zhangmatching,kong2023player} 
    add the feature that the agents may have collisions with each other when they are pulling the same arm in each time step. \cite{baek2021fair,yang2022distributed} consider the user or group's set of available arms as a subset of the complete arm set, and the regret for each user or group is based on the choices made within their own arm set.
    Compared with our model, these existing works do not consider the case where an agent may refuse to share all her information with others, and even decide to withdraw if she is forced to share some data that she is not willing to share.
    
    Another strand of literature studies multi-agent multi-armed bandit problems with limited communication. For example, \cite{agarwal2021multi} considers a multi-agent multi-armed bandit framework with limited communication rounds and limits bits in each communication rounds; \cite{madhushani2021call} proposes ComEx, a novel and cost-effective communication protocol for cooperative bandits; in the work of \cite{shahrampour2017multi}, players can only exchange information locally to estimate the global reward confined to a network structure; and \cite{sankararaman2019social} proposes a pairwise asynchronous gossip-based protocol that only needs to exchange a limited number of bits to finish communication. 
Compared with these models, we are more interested in which data users are willing to share, rather than the form in which the data is shared. Therefore, we did not consider the adoption of more efficient sharing methods by users, and regard the communication costs as zero. 

     \textbf{Federated Learning.}
     Federated learning (FL), a decentralized machine learning approach, has gained significant attention in recent years. This emerging paradigm enables training of machine learning models on distributed data sources while preserving data privacy. Several related works have explored the foundations of federated learning. \cite{mcmahan2017communication} introduced the concept of federated learning and proposed a practical framework for training models on decentralized data, which enables collaborative model training across multiple devices. There are numerous foundational works dedicated to addressing the various challenges of federated learning and designing federated learning algorithms from multiple aspects, such as privacy preservation, robustness, efficiency, security, scalability, and performance \cite{kamp2019efficient, pillutla2022robust, mcmahan2017communication, jeong2018communication, sattler2019robust}.  Federated learning also expands to encompass a broad range of applications in healthcare \citep{kaissis2020secure}, manufacturing \citep{qu2020blockchained}, agriculture \citep{durrant2022role}, energy \citep{saputra2019energy}, and other fields. 
     
     There are also some prior works on applying FL in bandit problems. 
     For example, in \cite{shi2021federated,shi2021federated1}, an MAB framework of multiple heterogeneous agents and a global principal is proposed. In this framework, agents' local bandit models are not the same (i.e., different agents may have different expected rewards on the same arm) and the goal of the principal is to find the arm with the largest global mean.
    \cite{zhu2021federated} proposes a framework where agents can only communicate their local data with neighbors in a connected graph.  They propose the FedUCB policy, in which the agents preserve differential privacy of their local data. 
    Compared with our model, \cite{shi2021federated,shi2021federated1} require that all agents must follow the global instructions unconditionally and send all their information to the principal. \cite{zhu2021federated} do not consider the case where agents could refuse to share their private information even with differential privacy. In our model, each agent can limit her shared information as her wish. 
    
    \textbf{Incentivized Learning.} 
    Since its proposal by \cite{frazier2014incentivizing, kremer2014implementing}, significant progress has been made in the field of incentivized learning in multi-armed bandit (MAB) problems. Specifically, there are two distinct lines of research in this area. 

    The first line of research \cite{kremer2014implementing, mansour2015bayesian, mansour2016bayesian, immorlica2018incentivizing} assumes that the principal has access to the complete history of actions and rewards, while the agents do not. In this setting, the principal can incentivize them to learn by leveraging proper information to them.
    
    The second line of research considers a publicly available history of actions and rewards, and the incentives are implemented through compensations. This concept was initially introduced by \cite{frazier2014incentivizing} and further generalized by \cite{han2015incentivizing} in Bayesian settings. In the non-Bayesian case,  \cite{wang2018multi} first studied this approach, and it has been recently extended by \cite{wang2021incentivizing}.

    %

    All the aforementioned works assume that every agent is myopic (i.e., they only do exploitation to maximize their short-term rewards), while we assume that every agent is considering her long-term rewards. 
    Hence, the incentive mechanism in this paper can be very different from theirs. 
    %


\section{Problem Formulation}
\begin{figure*}[t]
    \centering 
    \includegraphics[width=0.8\linewidth]{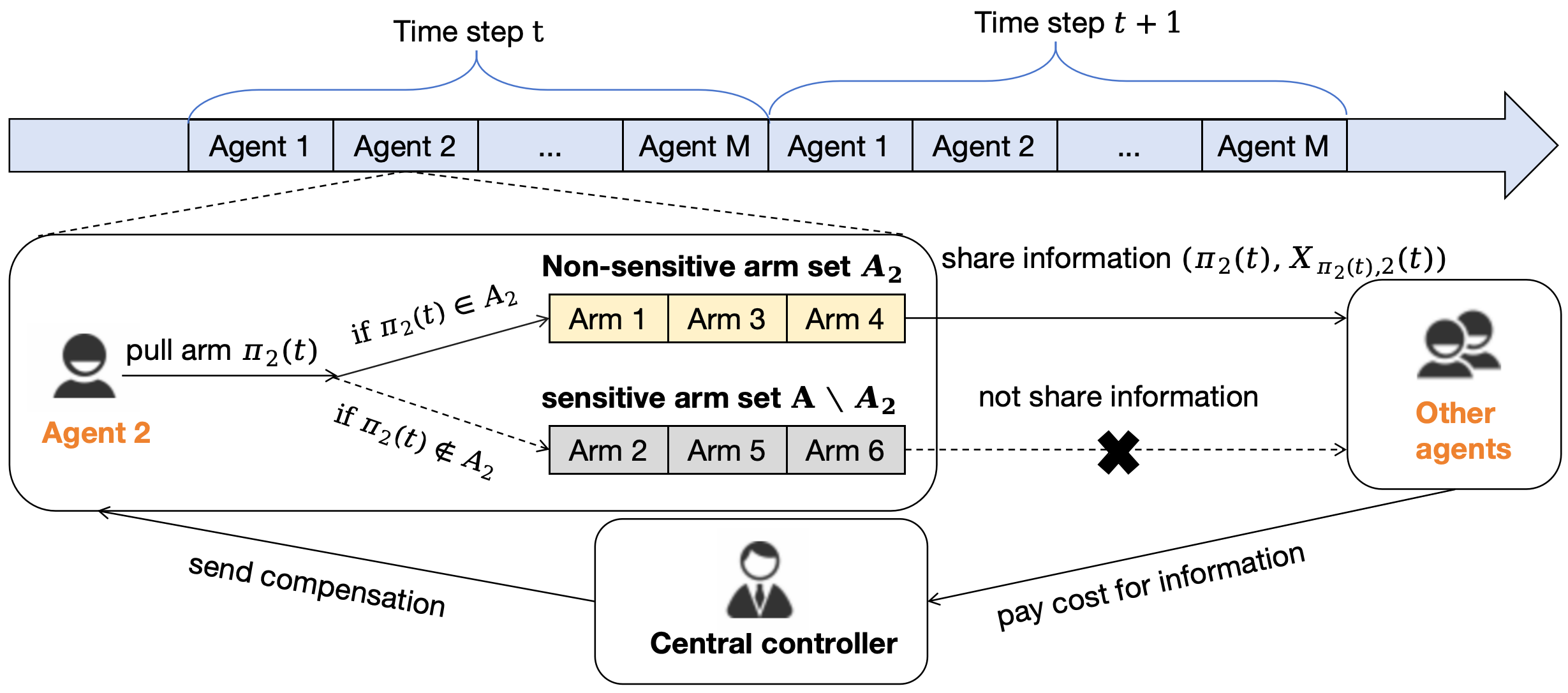}
    \caption{Illustration of the LSI-MAMAB model. In each time step $t$, the agents choose arms sequentially. After agent $m$ pulls an arm $\pi_m(t)$, she then gets a random reward $X_{\pi_m(t), m}(t) \sim D_{\pi_2(t)}$ from the arm $\pi_m(t)$. If $\pi_m(t) \in A_2$, then she could broadcast the reward, so that other agents can use this information. Otherwise, agent $m$ keeps this reward information to herself. 
    The central controller wants to design an algorithm to minimize the overall regret. Besides, he also uses an incentive mechanism to achieve IR: after an agent pulls an arm, he provides some compensation to her, and charges other agents for the shared information (if the arm-reward pair is shared).}
    \Description{Illustration of the LSI-MAMAB model. In each time step $t$, the agents choose arms sequentially. After agent $m$ pulls an arm $\pi_m(t)$, she then gets a random reward $X_{\pi_m(t), m}(t) \sim D_{\pi_2(t)}$ from the arm $\pi_m(t)$. If $\pi_m(t) \in A_2$, then she could broadcast the reward, so that other agents can use this information. Otherwise, agent $m$ keeps this reward information to herself. 
    The central controller wants to design an algorithm to minimize the overall regret. Besides, he also uses an incentive mechanism to achieve IR: after an agent pulls an arm, he provides some compensation to her, and charges other agents for the shared information (if the arm-reward pair is shared).}
    \label{model_figure} 
\end{figure*}

In the the limited shared information multi-agent multi-armed bandit (LSI-MAMAB) model, there exist $M$ independent agents $\{1,2,\cdots, M\}$ and each agent interacts with the same set of arms $A = \{1,2,\cdots,N\}$ for $T$ time steps. 
These arms are homogeneous for different agents, i.e., for any agent $m$, the random reward $X_{i,m}(t)$ of pulling arm $i \in [N]$ is sampled independently from a fixed (but unknown) distribution $D_i$, which is a bounded distribution on $[0,1]$.
We denote $\mu_i$ as the mean of $D_i$, and 
assume that $1\ge \mu_1> \mu_2\ge\cdots \ge \mu_N\ge 0$. We also let $\Delta_i \vcentcolon= \mu_1 - \mu_i$ be the expected reward gap between arm $i$ and the optimal arm and denote $\Delta_{\min} \vcentcolon= \Delta_2$. 
%
Although we assume that there only exists one optimal arm in the model, it is not a necessary condition for obtaining theoretical results in this paper. In fact, all the analysis works in the case that there are multiple optimal arms. 
Moreover, to simplify the analysis of cooperation, we assume that the agents always choose to share truthful information.

To model our assumption that each agent may only share partial information with the others, for any agent $m$, we denote $A_m \subseteq A$ as the set of arms that agent $m$ is willing to share information. We call $A_m$ the non-sensitive arm set and refer to the remaining arm set $A \setminus A_m$ as the sensitive arm set. Note that we allow $A_m$ to be an empty set, i.e., some agents may share nothing with others. 
We also let $S_i = \{m: i \in A_m\}$ be the set of agents who are willing to share the information of arm $i$, and we assume that for any arm $i$, $|S_i| \ge 1$, i.e., there is at least one agent who is willing to share her history information on arm $i$ with others. This assumption is to ensure feasible collaboration, because if the information of some arms is not shared by any agents, no collaboration mechanism can work to reduce the times of exploring these arms.



We illustrate how the LSI-MAMAB model works in Figure \ref{model_figure}. In each time step $t$, the agents choose arms sequentially, i.e., after agent $1$ pulls an arm (and broadcasts the arm-reward pair if she wants to), agent $2$ then chooses an arm to pull. 
%
Note that the order of pulling arms in each time step does not influence any theoretical results in this paper, the assumption that the order being $1,2,\cdots,M$ is only for simplicity. 
After agent $m$ pulls an arm $\pi_m(t)$, she then gets a random reward $X_{\pi_m(t), m}(t) \sim D_{\pi_m(t)}$ from the arm $\pi_m(t)$. If arm $\pi_m(t) \in A_m$ (i.e., results from which agent $m$ feels as non-sensitive information), then she could broadcast the arm-reward pair $(\pi_m(t), X_{\pi_m(t), m}(t))$, so that other agents can use this information. 
Otherwise, agent $m$ keeps this arm-reward pair information private to herself. 
Our goal is to design a collaboration protocol from the perspective of the central controller. On the one hand, we want an algorithm to minimize the overall regret when all the agents follow this protocol (i.e., each agent pulls arms according to the protocol, and share the information as long as the pulled arm is in her non-sensitive arm set), leading to an efficient collaboration. On the other hand, we expect the protocol guarantee a basic incentive of individual rationality, i.e., every agent can benefit from the collaboration compared to individual learning. To achieve this goal, once an agent pulls an arm and share the arm-reward information, we provide some compensation to her, and charge other agents for the shared information.




\subsection{The Overall Regret}

First, for agent $m$, we define her individual regret $R_m(T)$ of a policy as the expected gap between her total reward using the policy and the total reward by pulling the optimal arm for $T$ times, i.e., 
\begin{equation*}
    R_m(T) \triangleq \E\left[\sum_{t=1}^T\mu_{1} - \sum_{t=1}^T \mu_{\pi_m(t)}\right].
\end{equation*}

For different agents, their individual regrets can also be different. 

Next, we define the overall regret as the sum of all agents' individual regrets, i.e.,
\begin{equation*}
    R(T) \triangleq \sum_{m=1}^M R_m(T). 
\end{equation*}

As we can see, $R(T)$ is the overall loss of pulling sub-optimal arms by any agents, which reflects the efficiency of the collaboration. Our goal is to minimize the overall regret $R(T)$.
On the one hand, if everyone shares all her information with others, then it is easy to design algorithms with $R(T) = O(\sum_i {\log T\over \Delta_i})$.
On the other hand, if no one shares her information with others, then the overall regret is at least $\Omega(M\sum_i {\log T\over \Delta_i})$.
Hence, in our setting in which agents only share partial information with others, 
the overall regret should be in between this two bounds, and we want it to be as close to $O(\sum_i {\log T\over \Delta_i})$ as possible.



\subsection{The Individual Rationality} 
From the description above, every agent $m$ will receive some compensation $\Com_m$ at the end of the game because of the information she shares with others, and she also needs to pay some cost $\Cost_m$ because of the information others share with her. 
%
Therefore, the individual regret with incentive mechanism of each agent $m$ consists of three parts: the regret comes from pulling sub-optimal arms ($R_m(T)$); the (minus) compensation she receives ($-\Com_m$); and the cost she needs to pay ($\Cost_m$).

In this paper, we assume that each agent applies the 2-UCB (``2'' represents the factor in confidence radius) policy as her single-player policy, which is one of the most commonly studied asymptotically optimal algorithms. Specifically, in the 2-UCB policy, agents are assumed to pull the arm $\pi(t) = \argmax_{i\in [N]} \{ \hat{\mu}_i(t) + \sqrt{\frac{2 \log T}{N_i(t)}}\}$ at time step $t$,
where $\hat{\mu}_i(t)$ is the empirical mean of arm $i$ and $N_i(t)$ is the number of times that arm $i$ has
been pulled. 
We denote $R_{UCB}(T)$ the expected regret of this 2-UCB policy.

To ensure each agent can benefit from collaboration, we call an incentive mechanism satisfies individual rationality (IR), if for each agent $m$, the following inequality holds: 
\begin{equation*}
 R_m(T) - \Com_m + \Cost_m -R_{UCB}(T) \le 0.
\end{equation*}
As we can see, $R_m(T) - \Com_m + \Cost_m -R_{UCB}(T)$ is the relative regret of joining the collaboration system, and we could say that all agents are willing to participate in the collaboration if the incentive mechanism achieves IR.

The choice of incorporating the 2-UCB algorithm as the baseline algorithm in our definition of IR is motivated by the fact that the UCB algorithm is one of the most classic and well-known bandit algorithms in academic research. The algorithm is theoretically proven to be asymptotically optimal and is often considered as a benchmark algorithm in single-agent scenarios. Hence, we believe that using the UCB algorithm as baseline is highly appropriate.

\section{Minimizing Overall Regret}
\subsection{Balanced-ETC Algorithm}

\begin{algorithm}[t]
\caption{Balanced-ETC (for agent $m$)}\label{algorithm}
\begin{algorithmic}[1]
 \STATE \textbf{Input}: the set of arms $A_m$ that agent $m$ is willing to share, the balance level threshold $B$. 
 \STATE \textbf{Init}: $A(t) = A$
\FOR{$t=1,\dots,T$}
    \STATE $\forall i \in A(t)$, updating their $N_{i}(t)$ and $\hat{\mu}_{i}(t)$.
    \STATE Eliminate arms from $A(t)$ based on Eq. \eqref{Eq_1}.
    \IF {$|A(t)| > 1$ and $|A(t) \cap A_m | > 0$ and $B_m(t) \le B$} 
                \STATE //Explore step 
                \STATE Pull arm $\pi_m(t) = \argmin_{i\in A(t) \cap A_m} N_i(t)$ and receive random reward $X_{\pi_m(t), m}(t)$.
                \STATE Broadcast arm-reward pair $(\pi_m(t), X_{\pi_m(t), m}(t))$.
            \ELSE 
                \STATE//Commit step
                \STATE $\pi_m(t)=\argmax_{i\in A(t)} \hat{\mu}_i(t)$ 
            \ENDIF

\ENDFOR

\end{algorithmic}
\end{algorithm}

In this section, we propose our Balanced-ETC algorithm, which achieves an overall regret upper bound of $O(\sum_i {\log T\over \Delta_i})$ and is asymptotically optimal.

Our Balanced-ETC algorithm follows the elimination framework, i.e., it maintains an active arm set $A(t) \subseteq A$, and eliminates arms from this set if they are sub-optimal with high probability.
%
To guarantee the synchronicity among different agents, the elimination only depends on the publicly shared information, i.e., the arm-reward pairs $(\pi_m(t), X_{\pi_m(t), m}(t))$ that have been broadcast. 
%
Specifically, let $N_i(t)$ be the number of times that arm $i$ has been broadcast, and $\hat{\mu}_i(t)$ be the empirical mean of the expected reward (in the broadcast information). 
Then by concentration analysis, one can see that with high probability, $|\mu_i -\hat{\mu}_i(t) | \le \rad_i(t) \triangleq \sqrt{2\log T/N_i(t)}$. 
%
Therefore, after receiving new information $(\pi_m(t), X_{\pi_m(t), m}(t))$, every agent can update  $N_i(t)$'s and $\hat{\mu}_i(t)$'s for all the arms, and eliminate arm $i$ from the active arm set $A(t)$ if 

\begin{equation}\label{Eq_1}
  \hat{\mu}_i(t) + \rad_i(t) \le \max_{j\in A} \hat{\mu}_j(t) - \rad_j(t).
\end{equation}


%

As mentioned in the introduction, the main challenge here is that the imbalanced distribution of the shared data can result in severe over-exploration. 
For example, if all the agents are willing to share the information of arm $i$, but only one agent is willing to share the information of the optimal arm, the number of broadcast arm-reward pair of sub-optimal arm $i$ will be $M$ times larger than the number of broadcast arm-reward pairs of the optimal arm. In this case,  we need $\Omega({M\log T\over \Delta_i^2})$ explorations to eliminate arm $i$ from the active arm set, which leads to a regret of $\Omega({M\log T\over \Delta_i})$ and is far from optimal.
To tackle this problem, the Balanced-ETC algorithm sets a threshold to restrain over-exploration, i.e., we let the ratio between the maximum $N_i(t)$ and the minimum $N_i(t)$ in the active arm set to be less than the threshold. 
Specifically, when there are still some active arms that agent $m$ is willing to share (i.e., $A(t) \cap A_m \ne \emptyset$), we use $B_m(t) \triangleq {\min_{i\in A(t) \cap A_m} N_i(t)\over \min_{j \in A(t)}N_j(t)}$  to denote the balance level. 
%
As we can see, 
the explorations are more imbalanced when $B_m(t)$ is larger.
Hence, only if the balance level $B_m(t)$ is smaller than some fixed constant $B \ge 1$ (which is given as an input of our algorithm), we let agent $m$ to pull the arm $\argmin_{j\in A(t) \cap A_m} N_j(t)$ and then broadcast the information.
Otherwise, we do not let agent $m$ to explore in this time step, since this can result in severe over-exploration.
The pseudo code of Balanced-ETC is shown in Algorithm \ref{algorithm}.
At the beginning of time step $t$, each agent collects the information shared by other agents and updates $N_i(t)$'s and $\hat{\mu}_i(t)$'s for all the arms. 
Then they use Eq. \eqref{Eq_1} to update the active arm set $A(t)$.
%
%
Only if i) $|A(t)| > 2$ (there still require explorations); ii) $|A(t) \cap A_m | > 0$ (there are still arms that agent $m$ is willing to share and that need explorations); and iii) $B_m(t) \le B$ (exploring this arm will not result in severe over-exploration), agent $m$ will do one explore step, i.e., she chooses to pull arm $\pi_m(t) = \argmin_{i\in A(t) \cap A_m} N_i(t)$ and broadcast arm-reward pair $(\pi_m(t), X_{\pi_m(t), m}(t))$.
Otherwise, she will do one commit step, i.e., she chooses to pull the best active arm $\argmax_{i\in A(t)} \hat{\mu}_i(t)$ and do not share anything with others. 





\subsection{Regret Analysis}

In this section, we provide the overall regret upper bound of our Balanced-ETC algorithm, as well as its proof sketch. Due to space limit, 
detailed proofs
are deferred to the supplementary material. 



\begin{theorem}\label{Th1}
The overall regret of Balanced-ETC can be upper bounded by:
\begin{equation*}
R(T) < \sum_{i = 2}^N  \frac{8(1+\sqrt{B})^2 \log T}{\Delta_{i}} + \frac{4eMN^2}{\Delta_{\min}} + 2MN.
\end{equation*}
\end{theorem}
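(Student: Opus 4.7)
The plan is to split the analysis into a high-probability \emph{clean event} and a low-probability complementary event. Define the clean event $\mathcal{E}=\{|\hat{\mu}_i(t)-\mu_i|\le \rad_i(t)\text{ for every arm }i\text{ and every }t\}$. By Hoeffding's inequality applied to each arm for each possible value of $N_i(t)$, together with a union bound, $P(\mathcal{E}^c)\le 2N/T^3$, so the expected regret contribution of $\mathcal{E}^c$ is at most $MT\cdot 2N/T^3\le 2MN/T^2$, which is absorbed into the $2MN$ term of the stated bound. The remaining steps are carried out on $\mathcal{E}$.

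On $\mathcal{E}$, the optimal arm is never eliminated because $\hat{\mu}_1(t)+\rad_1(t)\ge \mu_1\ge \hat{\mu}_j(t)-\rad_j(t)$ for every $j$. The key structural lemma I would prove is a \emph{balance invariant}: for every active arm $i\in A(t)$ and every $t$, $N_i(t)\le B\cdot N_{\min}(t)+1$, where $N_{\min}(t)=\min_{j\in A(t)}N_j(t)$. The inductive step is immediate: whenever agent $m$ explores arm $i$, the algorithm's branching requires both $B_m(t)\le B$ and $i=\argmin_{j\in A_m\cap A(t)} N_j(t)$, which together force $N_i(t)\le B\cdot N_{\min}(t)$ right before the pull. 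Since $N_1(t)\ge N_{\min}(t)\ge (N_i(t)-1)/B$, we obtain $\rad_1(t)\le \sqrt{B}\,\rad_i(t)$ up to a lower-order correction in $N_i$.

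Combining the invariant with the non-elimination inequality for any active arm $i\neq 1$ gives the main per-arm bound. Since $i\in A(t)$, the elimination condition fails, so $\hat{\mu}_i(t)+\rad_i(t)>\hat{\mu}_1(t)-\rad_1(t)$; the concentration on $\mathcal{E}$ then yields $\Delta_i<2\rad_i(t)+2\rad_1(t)\le 2(1+\sqrt{B})\rad_i(t)$, so $N_i(t)<8(1+\sqrt{B})^2\log T/\Delta_i^2$. Because $N_i$ only grows through explorations, the total exploration regret contributed by arm $i$ is at most $\Delta_i\cdot N_i(T)<8(1+\sqrt{B})^2\log T/\Delta_i$; summing over $i\ge 2$ yields the first term of the claimed bound.

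The main obstacle I expect is controlling the commit regret, since commit pulls do not update any shared count and the same suboptimal arm can be selected many times between consecutive explorations. Under $\mathcal{E}$ a committed arm still lies in $A(t)$, so it inherits the same gap bound $\Delta_{\pi_m(t)}\le 2(\rad_{\pi_m(t)}(t)+\rad_1(t))$. My plan is to amortize commits against explorations using the following observation: in every round with $|A(t)|>1$, the agent whose non-sensitive set contains an arm attaining $N_{\min}(t)$ satisfies $B_m(t)\le 1$ and therefore must explore. Thus each maximal batch of consecutive commits is immediately followed by an exploration that strictly advances the algorithm's progress, and a careful accounting—pairing these batches with the subsequent explore step and summing over arms and agents—yields the additive $\frac{4eMN^2}{\Delta_{\min}}$ correction, which together with the bad-event term $2MN$ completes the proof.
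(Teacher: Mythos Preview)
Your treatment of the clean event, the balance invariant, and the explore regret is correct and matches the paper's Lemmas~\ref{event1} and~\ref{suboptimal} (though the paper uses a slightly tighter radius $\sqrt{3\log T/(2N_i(t))}$ for the event, reserving the algorithm's $\rad_i(t)$ for later incentive arguments).

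The genuine gap is in your commit-regret analysis. Conditioning on $\mathcal{E}$ only tells you that any active arm $i$ satisfies $\Delta_i\le 2(\rad_i(t)+\rad_1(t))$, and hence that the committed arm's gap is at most $O\bigl((1+\sqrt{B})\sqrt{\log T/N_{\min}(t)}\bigr)$. Your observation that at least one agent explores each round yields $N_{\min}(t)\ge t/N-1$ (this is exactly Lemma~\ref{lemma_number_explore}), but plugging this in and summing over $t$ and $m$ gives at best $O(MN^2\log T/\Delta_{\min})$ for the commit regret, carrying an unavoidable $\log T$ factor that the theorem does not have. The vague ``pairing batches with the subsequent explore step'' does not remove this factor: on $\mathcal{E}$ alone, a suboptimal arm $i$ can remain the empirical argmax for $\Theta(N\log T/\Delta_i^2)$ rounds, during each of which all $M$ agents commit to it.

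The paper's fix is to \emph{not} rely on $\mathcal{E}$ for commit steps. Instead, Lemma~\ref{Lemma_Commit} applies Hoeffding directly at each round $t$ with deviation threshold $\Delta_i/2$, yielding $\Pr(\hat{\mu}_i(t)>\hat{\mu}_1(t))\le 2e^{-(t/N-1)\Delta_i^2/2}$; this decays exponentially in $t$, so the sum over $t$ is $O(N/\Delta_i^2)$ with no $\log T$. Multiplying by $M\Delta_i$ and summing over $i$ gives the $4eMN^2/\Delta_{\min}$ term. The key conceptual point you are missing is that the commit bound requires a \emph{per-round} tail estimate with a round-dependent threshold, not the uniform $\rad_i(t)$-scale concentration packaged into $\mathcal{E}$.
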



Note that $B$ is a constant that does not depend on $N,M$ (one can simply choose $B = 1$ in practice). Hence, Theorem \ref{Th1} states that when $T$ is large enough, the overall regret of our algorithm is $O(\sum_{i=2}^N\frac{\log T}{\Delta_i})$. 
This is indeed the best one can do, since the overall regret lower bound is $\Omega(\sum_{i=2}^N\frac{\log T}{\Delta_i})$ even if everyone shares all her information with each other \cite{lai1985asymptotically}.
On the other hand, the average regret of each agent is $O(\sum_{i=2}^N\frac{\log T}{M\Delta_i} + {4eN^2\over \Delta_{\min}})$, which is $M$ times smaller than the regret of the single-agent case, and almost becomes a constant when there are sufficiently many agents in the collaboration.
Hence, our collaboration system is efficient in terms of the scale of players.


\begin{proof}[Proof Sketch of Theorem \ref{Th1}.] We first define a good event
\begin{align*}
 \mathcal{E} = \left\{ \forall t \le T, \forall i \in A, | \hat{\mu}_i(t) - \mu_i| \le \sqrt{3\log T\over 2N_i(t)}\right\},
 \end{align*}
 i.e., for any time step $t$ and any arm $i$, the gap between the empirical mean and the real mean is less than $\sqrt{3\log T\over 2N_i(t)}$. 
 
%

\begin{remark}
    Note that here the confidence radius in event $\mathcal{E}$ is not the same as that we used in Balanced-ETC ($\rad_i(t) = \sqrt{2\log T\over N_i(t)}$). The reason that we choose $\rad_i(t)$ to be larger than $\sqrt{3\log T\over 2N_i(t)}$ is we want to use $N_i(t)$ to obtain both an upper bound and a lower bound for $\Delta_i$. 
    This is crucial for our incentive mechanism (please see details in Section \ref{Section_Incentive}).
\end{remark}

 After applying some concentration inequalities, we have: 
 \begin{lemma}\label{event1}
    The probability of $\mathcal{E}$ happens satisfies
    $\Pr(\mathcal{E}) \ge 1 - \frac{2N}{T}$.
\end{lemma}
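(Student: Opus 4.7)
The plan is to prove this via a straightforward application of Hoeffding's inequality combined with a union bound, exploiting the fact that the rewards $X_{i,m}(t)$ are i.i.d.\ bounded in $[0,1]$ with mean $\mu_i$.

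First I would fix an arm $i$ and a hypothetical sample count $n \in \{1, 2, \dots, T\}$. Let $\hat{\mu}_{i,n}$ denote the empirical mean computed from the first $n$ broadcast samples of arm $i$. Since these samples are i.i.d.\ draws from $D_i$ (bounded in $[0,1]$), Hoeffding's inequality yields
\begin{equation*}
    \Pr\!\left(|\hat{\mu}_{i,n} - \mu_i| \ge \sqrt{\tfrac{3\log T}{2n}}\right) \le 2\exp\!\left(-2n \cdot \tfrac{3\log T}{2n}\right) = \frac{2}{T^3}.
\end{equation*}
The key point is that the radius $\sqrt{3\log T/(2n)}$ was engineered so that $2n\epsilon^2 = 3\log T$ kills the $n$-dependence inside the exponential, leaving a uniform tail bound across all possible counts.

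Next I would take a union bound. At any time $t \le T$, the sample count $N_i(t)$ takes some value in $\{0, 1, \dots, T\}$, and $\hat{\mu}_i(t)$ coincides with $\hat{\mu}_{i,N_i(t)}$ (the case $N_i(t) = 0$ is vacuously fine since no empirical mean is used). Therefore, the bad event $\mathcal{E}^c$ is contained in the union, over $i \in \{1,\dots,N\}$ and $n \in \{1,\dots,T\}$, of the events $\{|\hat{\mu}_{i,n} - \mu_i| \ge \sqrt{3\log T/(2n)}\}$. Summing the Hoeffding bound gives
\begin{equation*}
    \Pr(\mathcal{E}^c) \le N \cdot T \cdot \frac{2}{T^3} = \frac{2N}{T^2} \le \frac{2N}{T},
\end{equation*}
which rearranges to $\Pr(\mathcal{E}) \ge 1 - 2N/T$, as required.

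I do not expect any serious obstacle here; this is essentially a textbook concentration argument. The only subtlety worth mentioning is the conversion from ``for all $t \le T$'' to ``for all sample counts $n \le T$''. This works because between consecutive updates of arm $i$ the pair $(\hat{\mu}_i(t), N_i(t))$ is constant, so it suffices to control the finitely many distinct empirical means $\hat{\mu}_{i,1}, \dots, \hat{\mu}_{i,T}$ rather than time-indexed quantities. The slack between $2N/T^2$ and the stated $2N/T$ leaves room for the weaker radius to still suffice in the regret analysis, which is presumably why the looser bound is presented.
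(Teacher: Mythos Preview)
Your proof is correct and follows essentially the same strategy as the paper: Hoeffding's inequality plus a union bound. The only minor difference is that the paper union-bounds over all triples $(i,t,\tau)$ with $\tau \le t-1$, obtaining $N\cdot T\cdot T\cdot \frac{2}{T^3}=\frac{2N}{T}$ directly, whereas you union-bound only over pairs $(i,n)$ and get the sharper $\frac{2N}{T^2}$ before relaxing to $\frac{2N}{T}$; your observation that it suffices to control the distinct empirical means $\hat{\mu}_{i,1},\dots,\hat{\mu}_{i,T}$ is exactly what removes the redundant sum over $t$.
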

Based on Lemma \ref{event1}, we can bound the regret when $\mathcal{E}$ does not happen as 
\begin{equation*}
    \E[R(T) \I[\neg \mathcal{E}]] \le MT \cdot \Pr(\neg\mathcal{E}) \le 2MN.
\end{equation*}
Then we come to bound the regret when $\mathcal{E}$ happens. Conditioned on event $\mathcal{E}$,  for any sub-optimal arm $i \in [2,N]$, the number of explore steps pulling arm $i$ could be upper bounded by the following lemma. 
\begin{lemma}\label{suboptimal}
When event $\mathcal{E}$ happens, the optimal arm $1$ will never be eliminated. Moreover, $\forall i \in [2, N]$,  the number of explore steps pulling arm $i$ could be bounded by:
    $$ N_i(T) \le  \lceil \frac{8 \log T(1+\sqrt{B})^2}{\Delta_i^2} \rceil.$$
\end{lemma}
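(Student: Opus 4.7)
My plan is to handle the two claims separately but tie them together with one core observation: the algorithm's confidence radius $\rad_i(t) = \sqrt{2\log T / N_i(t)}$ is strictly larger than the concentration radius $\sqrt{3\log T/(2N_i(t))}$ that event $\mathcal{E}$ certifies. Hence under $\mathcal{E}$, the interval $[\hat{\mu}_i(t) - \rad_i(t),\, \hat{\mu}_i(t) + \rad_i(t)]$ strictly contains the true mean $\mu_i$ for every arm $i$ and every $t$. As the Remark after Lemma~\ref{event1} foreshadows, this gap between $\sqrt{2}$ and $\sqrt{3/2}$ is the slack that drives everything.

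For the non-elimination of arm $1$, I would argue directly from $\mathcal{E}$: on one side, $\hat{\mu}_1(t) + \rad_1(t) > \mu_1$; on the other, for every $j \in A$, $\hat{\mu}_j(t) - \rad_j(t) < \mu_j \le \mu_1$. Combining these shows the elimination rule in Eq.~\eqref{Eq_1} cannot trigger for arm $1$.

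For the bound on $N_i(T)$ with $i \ge 2$, I would analyze the instant just before some agent $m$ performs an explore step on sub-optimal arm $i$, and extract three facts: (a) arm $i$ is still active, so comparison with the still-active arm $1$ gives $\hat{\mu}_i(t) + \rad_i(t) > \hat{\mu}_1(t) - \rad_1(t)$; (b) substituting the $\mathcal{E}$-bounds $\hat{\mu}_i(t) \le \mu_i + \rad_i(t)$ and $\hat{\mu}_1(t) \ge \mu_1 - \rad_1(t)$ yields $\Delta_i < 2\rad_i(t) + 2\rad_1(t)$; (c) because $i = \argmin_{j \in A(t) \cap A_m} N_j(t)$ and $B_m(t) \le B$, we have $N_1(t) \ge \min_{j \in A(t)} N_j(t) \ge N_i(t)/B$, hence $\rad_1(t) \le \sqrt{B}\,\rad_i(t)$. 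Plugging (c) into (b) gives
\begin{equation*}
\Delta_i \;<\; 2(1+\sqrt{B})\sqrt{2\log T / N_i(t)},
\end{equation*}
which rearranges to $N_i(t) < 8(1+\sqrt{B})^2\log T / \Delta_i^2$. Since this strict inequality must hold \emph{each} time arm $i$ is picked in an explore step, $N_i$ cannot be incremented past $\lceil 8(1+\sqrt{B})^2\log T/\Delta_i^2 \rceil$ over the whole horizon.

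The main obstacle I expect is cleanly justifying step (c). It hinges on two somewhat delicate observations: the first claim forces arm $1 \in A(t)$, so $N_1(t) \ge \min_{j \in A(t)} N_j(t)$; and the agent-specific balance check $B_m(t) \le B$ (which compares $N_i(t)$ to the minimum over the \emph{full} active set, not just over $A(t) \cap A_m$) is exactly what is needed to upper bound $N_i(t)/N_1(t)$ by $B$. Once (c) is established, the rest is routine algebra on the two concentration radii.
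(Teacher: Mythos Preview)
Your proposal is correct and follows essentially the same route as the paper. The only cosmetic difference is framing: the paper argues by contradiction (assuming $N_i(T) > \lceil 8(1+\sqrt{B})^2\log T/\Delta_i^2 \rceil$ and deriving that arm $i$ would already satisfy the elimination rule~\eqref{Eq_1} at the moment it is pulled), whereas you argue directly that each explore-pull of arm $i$ forces $N_i(t) < 8(1+\sqrt{B})^2\log T/\Delta_i^2$; the key inequalities --- $\hat{\mu}_i(t)+\rad_i(t) > \hat{\mu}_1(t)-\rad_1(t)$ from activeness, $\Delta_i < 2\rad_i(t)+2\rad_1(t)$ from $\mathcal{E}$, and $N_i(t)/N_1(t)\le B$ from the balance check with $1\in A(t)$ --- are identical in both.
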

    The $(1+\sqrt{B})^2$ factor in Lemma \ref{suboptimal} is because that under our Balanced-ETC algorithm, the number of explorations on one active arm can be at most $B$ times larger than the number of explorations on another active arm.
    Therefore, if there are $\Theta\left({\log T \over \Delta_i^2}\right)$ number of explorations on arm $i$, the sum of two confidence radius (sub-optimal arm $i$ and optimal arm $1$) is $\Theta(\Delta_i + \sqrt{B}\Delta_i) = \Theta((1+\sqrt{B})\Delta_i)$, and is larger than $\Theta(\Delta_i)$.
    Only if there are $\Theta\left({(1+\sqrt{B})^2\log T \over \Delta_i^2}\right)$ number of explorations, one could prove that the sum of two confidence radius is $\Theta\left({\Delta_i \over {1+\sqrt{B}}} + {\sqrt{B}\Delta_i \over {1+\sqrt{B}}}\right) = \Theta(\Delta_i)$.


Based on Lemma \ref{suboptimal}, one can easily prove that under event $\mathcal{E}$, the expected regret in the explore steps can be upper bounded by 
    $\sum_{i = 2}^N  \frac{8(1+\sqrt{B})^2 \log T}{\Delta_{i}}$.

Finally, we consider the expected regret of commit steps under event $\mathcal{E}$. In a commit step, the agent pulls the active arm with the highest empirical mean, i.e., $\argmax_{i\in A(t)} \hat{\mu}_i(t)$. To guarantee the accuracy of empirical means $\hat{\mu}_i(t)$'s, we want to first prove that each active arm $i$ is pulled (and shared) for a sufficient number of times. This is stated in the following lemma.
\begin{lemma}\label{lemma_number_explore}
For $\forall t \in [T]$ and $\forall i \in A(t)$, we have 
    $N_i(t) \ge \frac{t}{N} - 1$.
\end{lemma}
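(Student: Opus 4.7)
My plan is to show that the minimum exploration count over currently active arms, $m(t) \vcentcolon= \min_{j \in A(t)} N_j(t)$, grows at a rate of at least $1/N$ in $t$, after which the lemma is immediate from $N_i(t) \ge m(t)$ for every $i \in A(t)$. The central step is the claim: at every time $t$ with $|A(t)| \ge 2$, some arm attaining $m(t)$ is pulled and broadcast within round $t$. To prove it, pick any $i^* \in \argmin_{j \in A(t)} N_j(t)$; the assumption $|S_{i^*}| \ge 1$ gives an agent $m^*$ with $i^* \in A_{m^*}$. Within time step $t$ the agents act sequentially, so examine $m^*$'s turn. Either (a) some earlier agent in this round already pulled $i^*$, in which case the claim holds, or (b) $i^*$'s count seen by $m^*$ is still $N_{i^*}(t)$, while all other counts are non-decreasing within the round. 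In case (b), $i^*$ still attains the minimum in both $A(t) \cap A_{m^*}$ and $A(t)$, so $B_{m^*}(t) = N_{i^*}(t)/N_{i^*}(t) = 1 \le B$; combined with $|A(t)| \ge 2$ and $i^* \in A(t) \cap A_{m^*}$, the explore branch fires and $m^*$ pulls an arm in $A(t) \cap A_{m^*}$ with count $m(t)$.

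Given this claim, $m(t)$ is non-decreasing (counts never fall; eliminations can only raise the min over $A(t)$). Moreover, writing $k = |\{j \in A(t) : N_j(t) = m(t)\}| \le N$, the claim supplies at least one pull into this set per round, so after at most $k \le N$ rounds every such arm either reaches count $m(t)+1$ or has been eliminated, forcing a strict increase of $m$. Unrolling this from $m(1) = 0$ yields $m(t) \ge \lfloor (t-1)/N \rfloor \ge t/N - 1$, and the lemma follows. The remaining regime $|A(t)| = 1$ is essentially vacuous for the downstream analysis: by Lemma~\ref{suboptimal}, the sole active arm under event $\mathcal{E}$ is the optimal arm, so commit pulls in this regime incur no regret and need no accuracy bound on $\hat\mu$.

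The main obstacle is the within-round sequential reasoning for the central claim: one must carefully account for how previous agents' broadcasts in the same round shift the counts seen by later agents, and confirm that these intermediate updates cannot hide $i^*$ from $m^*$. The leverage is simply that $i^*$'s own count is untouched in case (b), so it remains a global minimizer no matter how other counts inflate. A minor edge case is $N_{i^*}(t) = 0$ in very early rounds, where $B_{m^*}(t)$ is formally $0/0$; the natural convention is that the balance test is vacuously satisfied, preserving the argument.
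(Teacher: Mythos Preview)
Your proposal is correct and follows essentially the same approach as the paper: both argue that in each round some minimum-count active arm is broadcast, so the minimum count $m(t)$ increases by at least one every $N$ rounds, giving $m(t)\ge t/N-1$. Your treatment is in fact more careful than the paper's---which simply asserts the key claim without invoking $|S_{i^*}|\ge 1$ or checking the balance condition---and you rightly flag the $|A(t)|=1$ regime, which the paper's statement and proof silently ignore.
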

Roughly speaking, since there is at least one agent that is willing to share the information of arm $i$ for each arm $i$, in each time step, the active arm $i$ with the least number of $N_i(t)$ must be shared once.
Hence, after $t$ time steps, each active arm must be shared for at least $\frac{t}{N}$ times. 

Based on Lemma \ref{lemma_number_explore}, we know that for any active arm $i$, $N_i(t) = \Theta(t/N)$. Along with the fact that the optimal arm is never eliminated, for any sub-optimal arm $i$, concentration inequalities tell us that its empirical mean $\hat{\mu}_i(t)$ is higher than the empirical mean of the optimal arm $\hat{\mu}_1(t)$ with probability at most $O(\exp(-c\Delta_i^2 t/N^2))$.
Because of this, we have the following lemma.
\begin{lemma}\label{Lemma_Commit}
When event $\mathcal{E}$ happens, the expected regret in commit steps can be bounded by
$\frac{4eMN^2}{\Delta_{\min}}$.
\end{lemma}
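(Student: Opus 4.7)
The plan is to bound, for each agent $m$ and each sub-optimal arm $i \geq 2$, the expected commit-step regret incurred by agent $m$ pulling $i$, and then to sum these contributions. The key observation is that in a commit step, agent $m$ pulls $\argmax_{j \in A(t)}\hat{\mu}_j(t)$, and under event $\mathcal{E}$ the optimal arm $1$ is guaranteed to remain in $A(t)$ by Lemma~\ref{suboptimal}. Hence, whenever a sub-optimal arm $i$ is selected in a commit step at time $t$, we must have $\hat{\mu}_i(t) \geq \hat{\mu}_1(t)$.

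First, I would use the standard decomposition: $\hat{\mu}_i(t) \geq \hat{\mu}_1(t)$ implies at least one of $\hat{\mu}_i(t) \geq \mu_i + \Delta_i/2$ or $\hat{\mu}_1(t) \leq \mu_1 - \Delta_i/2$. Applying Hoeffding's inequality to each sub-event gives a bound of the form $2\exp(-N_{\min}(t)\Delta_i^2/2)$ where $N_{\min}(t) = \min\{N_i(t), N_1(t)\}$. Here I would invoke Lemma~\ref{lemma_number_explore} crucially: since arms $i$ and $1$ are both in $A(t)$, we have $N_i(t), N_1(t) \geq t/N - 1$, and therefore
\begin{equation*}
\Pr\bigl(\hat{\mu}_i(t) \geq \hat{\mu}_1(t)\bigr) \;\leq\; 2\exp\!\left(-\tfrac{(t/N - 1)\Delta_i^2}{2}\right).
\end{equation*}

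Next, I would sum over $t$. The regret per commit step of pulling arm $i$ is $\Delta_i$, so the expected commit regret incurred by agent $m$ via arm $i$ is at most
\begin{equation*}
\sum_{t=1}^T \Delta_i \cdot 2\exp\!\left(-\tfrac{(t/N - 1)\Delta_i^2}{2}\right)
\;\leq\; 2e\,\Delta_i \sum_{t=1}^\infty \exp\!\left(-\tfrac{t\Delta_i^2}{2N}\right),
\end{equation*}
where the factor $e$ absorbs the $+1$ shift (using $\Delta_i \leq 1$). The geometric series is at most $2N/\Delta_i^2$, yielding a per-arm, per-agent bound of $4eN/\Delta_i$. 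Summing over the sub-optimal arms gives $\sum_{i=2}^N 4eN/\Delta_i \leq 4eN^2/\Delta_{\min}$ per agent, and summing over the $M$ agents yields the claimed bound $4eMN^2/\Delta_{\min}$.

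The main obstacle, I expect, is the interaction between the conditioning on $\mathcal{E}$ and the Hoeffding bound: strictly speaking, conditioning on $\mathcal{E}$ could distort the distribution of $\hat{\mu}_i(t)$ and $\hat{\mu}_1(t)$. I would handle this by keeping the probability bound unconditional and simply using $\mathcal{E}$ only to justify that arm $1$ stays active; the Hoeffding bound on $\hat{\mu}_i(t) \geq \hat{\mu}_1(t)$ then holds regardless. A second delicate point is verifying that $N_i(t) \geq t/N - 1$ applies at every time step where arm $i$ is active — this relies on Lemma~\ref{lemma_number_explore}, which is already proved, but care is needed because the agent who actually pulls arm $i$ in the commit step may not be one who shares arm $i$'s information; however, the probability bound only requires the publicly observed $\hat{\mu}_i(t)$ and $\hat{\mu}_1(t)$ to concentrate, and those are based on the shared broadcasts whose counts satisfy the lemma.
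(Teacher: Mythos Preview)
Your proposal is correct and follows essentially the same approach as the paper: decompose the event $\hat{\mu}_i(t)\ge\hat{\mu}_1(t)$ into two half-gap deviations, apply Hoeffding with the broadcast counts lower-bounded via Lemma~\ref{lemma_number_explore}, and sum the resulting geometric-type series over $t$, arms, and agents. If anything, you are slightly more careful than the paper in distinguishing $N_1(t)$ from $N_i(t)$ and in flagging the conditioning-on-$\mathcal{E}$ subtlety, but the argument and the constants land in the same place.
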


Summing over the expected regret in the explore steps and in commit steps when event $\mathcal{E}$ happens, as long as the expected regret when when event $\mathcal{E}$ does not happen, we finally get the regret upper bound in Theorem \ref{Th1}. 
\end{proof}

Although we assume that for any arm $i$, there is at least one agent who is willing to share her history information on arm $i$ with
others, our algorithm can still function properly even if there are arms that have not been shared by any user. 
In this case, each agent would need to independently explore these arms, and the cooperation among users would not accelerate the exploration of these arms.


\subsection{How Much Shared Information Do We Need?}

In this section, we discuss how much shared information (i.e., the number of shared arm-reward pairs) is needed to ensure that the overall regret is close to $O(\sum_i\frac{\log T}{\Delta_i})$.  
In the context of limited information sharing, our primary concern lies in the amount of data that users specifically share. We are more interested in which data users are willing to share, rather than the format in which the shared data is presented. Therefore, we have not taken into consideration the adoption of more efficient sharing methods by users or the compression of data to reduce communication costs.

\begin{theorem}\label{Theorem_Min_Info}
    For any algorithm in the LSI-MAMAB model, if the number of shared arm-reward pairs is $o(\log T)$, then the overall regret is lower bounded by $\Omega(MN \log T)$.
\end{theorem}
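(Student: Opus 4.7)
The plan is to reduce the multi-agent lower bound to $M$ parallel applications of the classical single-agent Lai--Robbins bound, using the fact that a shared pool of size $o(\log T)$ cannot materially reduce any individual agent's need to personally explore sub-optimal arms. Intuitively, if all agents together produce only $o(\log T)$ broadcasts, then even if each agent is handed the entire pool for free, no single agent has enough extra samples of each sub-optimal arm to escape the per-agent $\Omega(N \log T)$ regret floor; summing over $M$ agents yields $\Omega(MN \log T)$.

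To carry this out I would fix a hard instance with Bernoulli arms $\mu_1 = \tfrac{1}{2} + \delta$ and $\mu_i = \tfrac{1}{2} - \delta$ for $i \geq 2$, so that every gap $\Delta_i$ and every KL divergence $D(\mu_i \| \mu_1)$ is a constant depending only on $\delta$. Let $N_{i,m}(T)$ be the number of times agent $m$ pulls arm $i$ (in either explore or commit steps), and let $K_i$ denote the total number of broadcasts of arm $i$ across all agents, so that $\E[\sum_i K_i] = o(\log T)$ by hypothesis. For each sub-optimal arm $i$ and each agent $m$, I apply a standard change-of-measure / divergence-decomposition argument against the alternative instance in which arm $i$'s mean is boosted slightly above $\mu_1$. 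Agent $m$'s observation record consists of her own $T$ personal pulls together with every broadcast she received, and the total number of samples of arm $i$ in this record is exactly $N_{i,m}(T) + K_i$. For any algorithm with sub-polynomial regret on both instances, the divergence decomposition then yields
\begin{equation*}
(\E[N_{i,m}(T)] + \E[K_i]) \cdot D(\mu_i \| \mu_1') \;\geq\; (1 - o(1)) \log T.
\end{equation*}
Rearranging, multiplying by $\Delta_i$, and summing over $i \in \{2, \ldots, N\}$ and $m \in [M]$ gives $R(T) \geq c(\delta) \cdot M(N-1) \log T - M \cdot \E[\sum_i K_i] = \Omega(MN \log T) - o(M \log T)$, which is $\Omega(MN \log T)$ as long as $N \geq 2$ (the statement is vacuous otherwise).

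The main obstacle will be justifying the divergence-decomposition step in this multi-agent, asynchronous setting: agent $m$'s actions depend on the broadcasts of the other agents, whose own policies in turn depend recursively on the broadcasts they see, so the filtration controlling the likelihood ratio is more entangled than in the textbook single-agent case. The clean resolution is to view the whole LSI-MAMAB protocol as a single joint randomized policy and apply the divergence decomposition only to agent $m$'s observed $\sigma$-algebra (her own rewards plus received broadcasts): the KL contribution from arm $i$ still telescopes into $D(\mu_i \| \mu_1') \cdot (N_{i,m}(T) + K_i)$, regardless of which agent generated each sample. A secondary subtlety is that the $o(\log T)$ hypothesis is stated for the original instance, whereas the change-of-measure step measures the broadcast count on the alternative; this is handled by a standard truncation, restricting to the event that the pool size on the alternative remains within a constant factor of its size on the original and absorbing the resulting $O(1)$ slack into $c(\delta)$. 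Finally, any algorithm that is not sub-polynomially consistent on the alternative already incurs polynomially large regret there, which trivially dominates $MN \log T$ on one of the two instances.
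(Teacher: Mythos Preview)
Your proposal is correct and follows essentially the same route as the paper's proof: both invoke the Lai--Robbins / divergence-decomposition lower bound to conclude that every agent must \emph{observe} each sub-optimal arm $\Omega(\log T)$ times, so with a shared pool of only $o(\log T)$ broadcasts each agent must personally pull every sub-optimal arm $\Omega(\log T)$ times, yielding $\Omega(MN\log T)$ total regret. The paper's argument is considerably terser---it simply cites \cite{lai1985asymptotically} and omits the filtration, double-counting, and consistency subtleties you flag---but the underlying idea is identical.
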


The key idea of the proof is that with only $o(\log T)$ shared arm-reward pairs, no one can make sure that which arm is the optimal one, and has to pull all the arms for $\Omega(\log T)$ times by themselves. The detailed proof is deferred to Appendix \ref{appendix E}. 



Theorem \ref{Theorem_Min_Info} tells us that if we want the overall regret to be $O(\sum_i\frac{\log T}{\Delta_i})$, there must be at least $\Omega(\log T)$ number of shared arm-reward pairs.
On the other hand, based on Lemma \ref{suboptimal}, we know that with high probability, the number of shared arm-reward pairs in Balanced-ETC is upper bounded by $O(\sum_i\frac{\log T}{\Delta_i^2})$.
Hence there do not exist algorithms that achieve a similar overall regret upper bound as Balanced-ETC but with a much smaller number of shared arm-reward pairs.
This means that our Balanced-ETC algorithm is very cost-effective. 


\section{Incentive Mechanism in Balanced-ETC}\label{Section_Incentive}


Although the overall regret of Balanced-ETC is asymptotically optimal, the algorithm itself cannot ensure everyone benefits, i.e. , 
some agents may suffer from a higher individual regret (compared with not attending the collaboration system and running a 2-UCB policy herself).
For example, if only one agent is willing to share the information of arm $2$, then in our LSI-MAMAB model, she needs to pull (and share) arm $2$ for more times than running the 2-UCB algorithm alone, since the number of pulls on sub-optimal arms in elimination-based algorithms is always larger than in UCB-based algorithms (see a detailed example in Section \ref{Section_Experiment}). 
Hence, it is necessary to apply some incentive mechanisms to achieve IR. 
%
%



In our incentive mechanism, the center controller is responsible for compensating the agents for sharing their data, and collecting the costs from them for reading the shared data from other agents. The specific amount of cost and compensation are given in Section 5.1. The proposed incentive mechanisms in this paper is objectively present in the real world. For example, in the current existing federal framework, it is common practice for the federated learning platform to provide compensation to the data providers and charge fees to the data users \cite{zeng2021comprehensive}. In our discussion, each agent can simultaneously act as both a data provider and a data user, receiving compensation and incurring charges from the platform. Our algorithms and experiments demonstrate that both agents and the platform can benefit from collaboration, indicating that the platform is viable and agents are willing to participate in the cooperation.

\subsection{Incentive Mechanism}

In our incentive mechanism, at the end of the game, agent $m$ receives $\Com_m$ to compensate her individual regret, where
\begin{equation}\label{Eq_5}
    \Com_m = \sum_{i\in A\setminus A(T)} (N_{i,m}^e(T) + N_{i,m}^c(T))\sqrt{\frac{8(1+\sqrt{B})^2\log T}{N_i(T)}}.
\end{equation}
Here $A(T)$ is the active arm set at time step $T$, $N_{i,m}^e(T)$ is the number of times that agent $m$ shares the information of arm $i$ (i.e., the number of times agent $m$ pulls arm $i$ in an explore step), and $N_{i,m}^c(T)$ is the number of times agent $m$ pulls arm $i$ in a commit step, and $N_i(T)$ is the number of times that arm $i$ has been shared (by all the agents). 
That is, for each time agent $m$ pulls a sub-optimal arm $i \in A\setminus A(T)$, she receives $\sqrt{\frac{8(1+\sqrt{B})^2\log T}{N_i(T)}}$ for compensation.

However, this compensation is much higher than necessary.
In fact, with high probability, this compensation $\Com_m$ is higher than the individual regret $R_m(T)$ of agent $m$.
To make ends meet, we also let all the agents to pay some cost for receiving the shared information, since this information does help them learn and avoid some potential regret.
Specifically, at the end of the game, agent $m$ also needs to pay $\Cost_m$ for the shared information, where
    \begin{equation}\label{Eq_6}
    \Cost_m = \sum_{i \in A\setminus A(T)}N_i(T) \sqrt{\frac{(\sqrt{2}-\sqrt{3/2})^4 \log T}{128(1+\sqrt{B})^2 N_i(T)}}.
\end{equation}

That is, for each time agent $m$ receives a shared arm-reward pair from sub-optimal arm $i \in A\setminus A(T)$, she needs to pay $\sqrt{\frac{(\sqrt{2}-\sqrt{3/2})^4 \log T}{128(1+\sqrt{B})^2 N_i(T)}}$ for this information.

\subsection{Theoretical Analysis}

To understand the meaning of our incentive mechanism, we first introduce the following lemma (detailed proof of which is deferred to supplementary material).

\begin{lemma}\label{range}
When event $\mathcal{E}$ happens and time horizon $T$ is large enough such that $\frac{T - 2N}{\log T} > \frac{8(1+\sqrt{B})^2N}{\Delta_{\min}^2}$ (which means that all the sub-optimal arms must be eliminated), for all sub-optimal arms $i \in [2,N], \Delta_i$ can be bounded by:
\begin{equation*}
    \sqrt{(\sqrt{2}-\sqrt{3/2})^2\log T \over N_i(T)} \le \Delta_i \le \sqrt{\frac{8(1+\sqrt{B})^2\log T}{N_i(T)}}.
\end{equation*}
\end{lemma}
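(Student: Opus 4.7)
The plan is to handle the two inequalities separately. The upper bound follows almost mechanically from Lemma~\ref{suboptimal}: rearranging $N_i(T) \le \lceil 8(1+\sqrt{B})^2 \log T / \Delta_i^2\rceil$ for $\Delta_i$ yields $\Delta_i \le \sqrt{8(1+\sqrt{B})^2 \log T / N_i(T)}$, up to the usual negligible rounding from the ceiling. The lower bound is the interesting direction and is where the hypothesis on $T$ becomes essential: I want to pick, for every sub-optimal arm $i$, the specific time step $t_i \le T$ at which $i$ is eliminated from the active set and then extract a quantitative bound on $\Delta_i$ from the elimination inequality at that instant.

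Concretely, fix a sub-optimal arm $i$, let $t_i$ be the time at which it leaves $A(t)$, and let $j^{*}$ realize $\max_{j\in A}\bigl(\hat{\mu}_j(t_i) - \rad_j(t_i)\bigr)$. The elimination rule in Eq.~\eqref{Eq_1} gives $\hat{\mu}_i(t_i) + \rad_i(t_i) \le \hat{\mu}_{j^{*}}(t_i) - \rad_{j^{*}}(t_i)$, while under $\mathcal{E}$ we have $|\hat{\mu}_k(t_i)-\mu_k| \le \sqrt{3\log T/(2N_k(t_i))}$ for every arm $k$, together with $\rad_k(t_i) = \sqrt{2\log T/N_k(t_i)}$. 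Substituting these into the elimination inequality converts it into
\[
\mu_i + (\sqrt{2}-\sqrt{3/2})\sqrt{\log T/N_i(t_i)} \le \mu_{j^{*}} - (\sqrt{2}-\sqrt{3/2})\sqrt{\log T/N_{j^{*}}(t_i)}.
\]
Dropping the non-negative second term on the right and using $\mu_{j^{*}} \le \mu_1$ gives $\Delta_i \ge (\sqrt{2}-\sqrt{3/2})\sqrt{\log T/N_i(t_i)}$. Finally, since arm $i$ is never pulled after being eliminated, $N_i(T)=N_i(t_i)$, and folding the constant inside the square root produces exactly the lower bound stated in the lemma.

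The main obstacle is verifying that, for every sub-optimal $i$, such a time $t_i \le T$ actually exists, i.e.\ that no sub-optimal arm survives to the end of the horizon; without this we would have no elimination inequality to leverage and the lower bound could fail. This is precisely where the hypothesis $\frac{T-2N}{\log T} > \frac{8(1+\sqrt{B})^2 N}{\Delta_{\min}^2}$ is used: combining Lemma~\ref{lemma_number_explore}, which guarantees $N_i(T)\ge T/N - 1$ whenever arm $i$ stays active, with the cap $N_i(T) \le \lceil 8(1+\sqrt{B})^2 \log T/\Delta_i^2\rceil$ from Lemma~\ref{suboptimal}, a sub-optimal arm that were still active at time $T$ would force $\frac{T-2N}{\log T} \le \frac{8(1+\sqrt{B})^2 N}{\Delta_i^2} \le \frac{8(1+\sqrt{B})^2 N}{\Delta_{\min}^2}$, contradicting the hypothesis. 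A secondary subtlety is that the elimination condition compares $i$ against $\max_{j\in A}(\hat{\mu}_j - \rad_j)$ rather than directly against arm~$1$; it is the trivial inequality $\mu_{j^{*}} \le \mu_1$, combined with Lemma~\ref{suboptimal}'s guarantee that arm~$1$ is never eliminated so that $\mu_{j^{*}}$ is well-defined throughout, that converts the derived gap against $j^{*}$ into the stated bound on $\Delta_i$.
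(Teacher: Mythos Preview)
Your proof is correct and follows essentially the same approach as the paper: the upper bound comes directly from Lemma~\ref{suboptimal}, and for the lower bound you exploit the elimination inequality at the moment arm $i$ leaves the active set, combining the event-$\mathcal{E}$ concentration $|\hat{\mu}_k-\mu_k|\le\sqrt{3\log T/(2N_k)}$ with $\rad_k=\sqrt{2\log T/N_k}$ to extract $(\sqrt{2}-\sqrt{3/2})\sqrt{\log T/N_i(t_i)}\le \mu_{j^*}-\mu_i\le\Delta_i$, and then use $N_i(T)=N_i(t_i)$. Your treatment is in fact slightly more complete than the paper's, since you explicitly verify via Lemmas~\ref{suboptimal} and~\ref{lemma_number_explore} that the hypothesis on $T$ forces every sub-optimal arm to be eliminated, whereas the paper only asserts this parenthetically.
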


The proof of Lemma \ref{range} is quite straightforward: the second inequality could be obtained by Lemma \ref{suboptimal} directly. 
As for the first inequality, note that under event $\mathcal{E}$, 
\begin{equation*}
    \max_{j\in A(t)} \hat{\mu}_j(t) - \rad_j(t) \le \max_{j\in A(t)} \mu_j \le \mu_1.
\end{equation*}
Hence, we cannot eliminate arm $i$ from the active arm set if $\hat{\mu}_i(t) + \rad_i(t) > \mu_1$.
When $\sqrt{(\sqrt{2}-\sqrt{3/2})^2\log T \over N_i(t)} > \Delta_i$, we know that 
\begin{equation*}
    \hat{\mu}_i(t) + \rad_i(t) \ge \mu_i + (\sqrt{2} - \sqrt{3/2})\sqrt{\log T\over N_i(t)} \ge \mu_i + \Delta_i = \mu_1,
\end{equation*}
which means that arm $i$ cannot be eliminated.
Therefore, since we eliminate arm $i$ from the active arm set at the end of the game, we must have that$\sqrt{(\sqrt{2}-\sqrt{3/2})^2\log T \over N_i(T)} \le \Delta_i$.

\begin{figure*}[ht]
\centering 
\subfigure[Balanced Setting]{ 
\label{Figure_1} 
\includegraphics[width=0.32\linewidth]{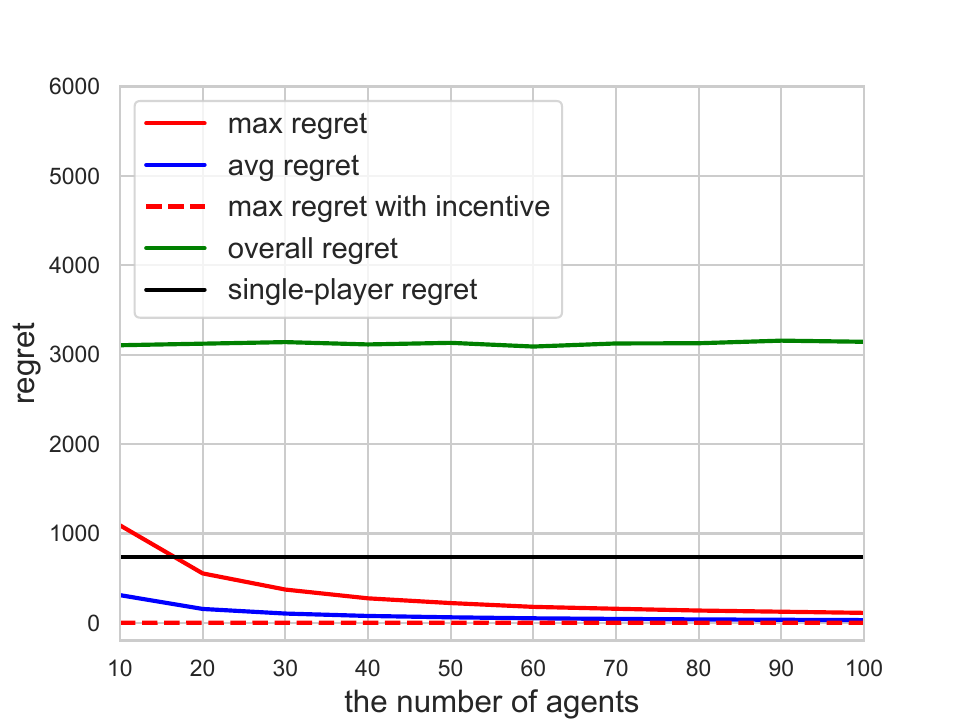}
}
\subfigure[Imbalanced Setting]{ 
\label{Figure_2} 
\includegraphics[width=0.32\linewidth]{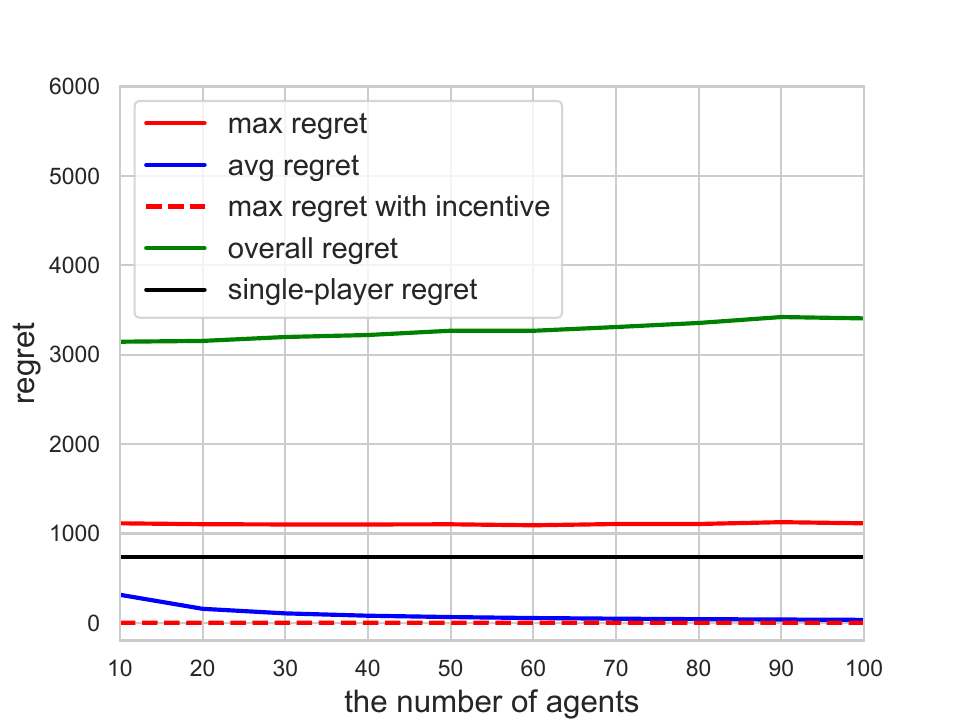}
}
\subfigure[Central Controller's Profits]{ 
\label{Figure_3} 
\includegraphics[width=0.32\linewidth]{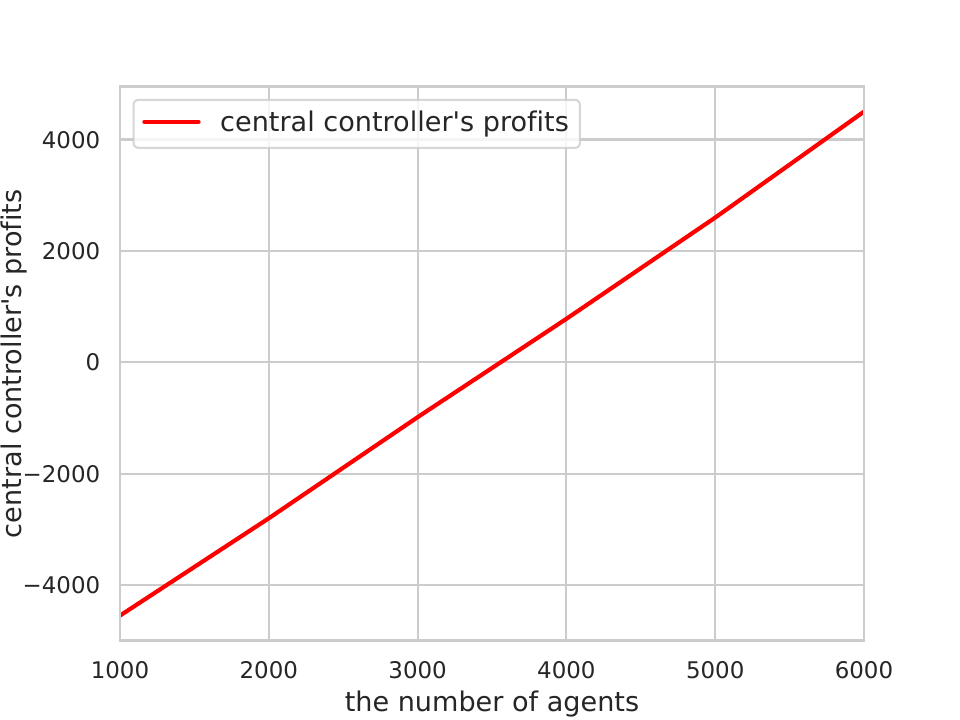}
}
\caption{Experimental results of Balanced-ETC}
\label{fig:image2}
\Description{Experimental results of Balanced-ETC}
\end{figure*}

Lemma \ref{range} tells us a range for the values of $\Delta_i$'s. Hence our incentive mechanism is to let agent $m$ receive compensation as if $\Delta_i$ equals the upper bound (every time she pulls arm $i$) and pay cost as if $\Delta_i$ equals the lower bound (every time an arm-reward pair of arm $i$ is shared to her). 
This makes sense since if we want all the agents to benefit from this collaboration, we must compensate them by the upper bound of their loss, and charge them by the lower bound of their loss.
Besides, when calculating the value of $\Cost_m$, we must consider the ratio between the number of explorations of Balanced-ETC and 2-UCB (in single-agent system).
Since there could be more explorations in Balanced-ETC, the unit cost (of receiving one arm-reward pair of arm $i$) should also be smaller than the lower bound of $\Delta_i$ (as we set in Eq. \eqref{Eq_6}).
%
%

\begin{theorem}\label{Theorem_IR}
    When event $\mathcal{E}$ happens and time horizon $T$ is large enough such that $\frac{T}{\log^2 T} > \frac{N}{4\Delta_{\min}^4}$, applying Eq. \eqref{Eq_5} and Eq. \eqref{Eq_6} as the incentive mechanism in our Balanced-ETC algorithm achieves IR, i.e., for any agent $m$,
    \begin{equation*}
        R_m(T) - \Com_m + \Cost_m - R_{UCB}(T) \le 0. 
    \end{equation*}
\end{theorem}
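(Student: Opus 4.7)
The target $R_m(T) - \Com_m + \Cost_m - R_{UCB}(T) \le 0$ naturally splits into two inequalities: $R_m(T) \le \Com_m$ and $\Cost_m \le R_{UCB}(T)$. All the work is conditional on event $\mathcal{E}$ and on the hypothesis $T/\log^2 T > N/(4\Delta_{\min}^4)$, which is strictly stronger than the threshold in Lemma \ref{range}, so every sub-optimal arm has been eliminated by time $T$, i.e., $A\setminus A(T) = \{2,\ldots,N\}$ and agent $m$ plays only the optimal arm after elimination.

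For the first inequality I expand $R_m(T) = \sum_{i=2}^N (N_{i,m}^e(T) + N_{i,m}^c(T))\Delta_i$ and substitute the upper bound $\Delta_i \le \sqrt{8(1+\sqrt{B})^2 \log T/N_i(T)}$ from Lemma \ref{range} into each summand. This reproduces exactly the defining expression for $\Com_m$, giving $R_m(T) \le \Com_m$. Conceptually the mechanism over-compensates each pull using a provable upper bound on its incurred regret computed from the publicly observable $N_i(T)$.

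For the second inequality I rewrite $\Cost_m = \sum_{i=2}^N \sqrt{N_i(T)\cdot (\sqrt{2}-\sqrt{3/2})^4 \log T/(128(1+\sqrt{B})^2)}$ and plug in the upper bound $N_i(T) \le \lceil 8(1+\sqrt{B})^2\log T/\Delta_i^2\rceil$ from Lemma \ref{suboptimal}. After using the large-$T$ hypothesis to absorb the ceiling correction, this yields $\Cost_m \le \sum_{i=2}^N (\sqrt{2}-\sqrt{3/2})^2 \log T/(4\Delta_i)$ up to negligible terms. To dominate this by $R_{UCB}(T)$ I invoke the standard analysis of 2-UCB, which forces each sub-optimal arm to be explored on the order of $\log T/\Delta_i^2$ times in expectation (otherwise the confidence radius $\sqrt{2\log T/N_i}$ would still exceed $\Delta_i$), so $R_{UCB}(T) \ge c\sum_{i=2}^N \log T/\Delta_i$ for a constant $c$ much larger than $(\sqrt{2}-\sqrt{3/2})^2/4 \approx 0.0092$. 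The cost formula was evidently engineered so this comparison succeeds: the $(1+\sqrt{B})^2$ in the denominator exactly offsets the inflation of Balanced-ETC's exploration counts over those of 2-UCB noted after Lemma \ref{range}.

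The main obstacle is making this last comparison rigorous rather than asymptotic. One needs a quantitative lower bound on $R_{UCB}(T)$ valid for all $T$ above the stated threshold, and one must absorb additive corrections arising from the ceilings in Lemma \ref{suboptimal}, from the initial pulls, and from sub-leading constants in the UCB regret. I expect the hypothesis $T/\log^2 T > N/(4\Delta_{\min}^4)$ to be used precisely here: the fourth power of $\Delta_{\min}$ suggests it dominates products of two factors of order $1/\Delta_{\min}^2$ coming from the $N_i(T)$ bounds, while the $\log^2 T$ handles the extra $\log T$ generated when squaring the radius-type expression inside $\Cost_m$. Once these lower-order terms are shown to be uniformly below the gap between the leading constants of $R_{UCB}(T)$ and $\Cost_m$, combining the two halves yields the theorem.
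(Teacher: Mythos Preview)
Your decomposition into $R_m(T)\le \Com_m$ and $\Cost_m\le R_{UCB}(T)$ is exactly the paper's, and your proof of the first inequality via the upper bound of Lemma~\ref{range} is identical. Your upper bound on $\Cost_m$ is also correct; citing Lemma~\ref{suboptimal} is equivalent to citing the upper half of Lemma~\ref{range}, and the latter avoids the ceiling entirely, so no ``absorb the ceiling'' step is needed.

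The one genuine gap is the lower bound on $R_{UCB}(T)$. You call it ``the standard analysis of 2-UCB'' and guess that some constant $c$ ``much larger'' than $(\sqrt{2}-\sqrt{3/2})^2/4$ will appear. It does not: the paper proves a dedicated lemma (Lemma~\ref{range_ucb}) showing that, on the analogous good event for single-agent 2-UCB and under exactly the hypothesis $T/\log^2 T > N/(4\Delta_{\min}^4)$, every sub-optimal arm $i$ satisfies $N_i'(T)\ge (\sqrt{2}-\sqrt{3/2})^2\log T/(4\Delta_i^2)$. This gives $R_{UCB}(T)\ge \sum_{i\ge 2}(\sqrt{2}-\sqrt{3/2})^2\log T/(4\Delta_i)$, matching the $\Cost_m$ upper bound \emph{exactly}, not with slack. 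The argument is a pigeonhole: if arm $k$ were pulled fewer than $L_k=(\sqrt{2}-\sqrt{3/2})^2\log T/(4\Delta_k^2)$ times, split $[1,T]$ into $L_k$ blocks and find a block where $k$ is never pulled; throughout that block its UCB stays at least $\mu_1+\Delta_k$, so some other arm must absorb enough pulls that its UCB drops below $\mu_1+\Delta_k/2$, contradicting the selection rule. The threshold $T/\log^2 T > N/(4\Delta_{\min}^4)$ is precisely what makes the block long enough for this to work, not a device for handling ceilings in $\Cost_m$ as you conjectured.
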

The proof of Theorem \ref{Theorem_IR} is deferred to Appendix \ref{appendix F}. This result indicates that that every rational agent will join the 
collaborative learning. 
%

Under this incentive mechanism, the compensation that the central controller needs to pay is
\begin{align*}
    \sum_m \Com_m 
    =& \sum_m \sum_{i\in A\setminus A(T)} (N_{i,m}^e(T) + N_{i,m}^c(T))\sqrt{\frac{8(1+\sqrt{B})^2\log T}{N_i(T)}}\\
    =& \sum_{i=2}^N \sqrt{\frac{8(1+\sqrt{B})^2\log T}{N_i(T)}} \left(N_i(T) + \sum_m N_{i,m}^c(T)\right)\\
    =& O\left(\sum_{i=2}^N \sqrt{\log TN_i(T)} + MN^2\sqrt{\frac{\max_{i\ge 2}N_i(T)}{\log T}}\right),
\end{align*}

%
The last equation holds because Lemma \ref{range} tells us $\Delta_i$ is asymptotic to $\sqrt{\frac{\log T}{N_i(T)}}$, which means that the compensation caused by commit steps has the same order as the regret caused by commit steps (Lemma \ref{Lemma_Commit}).
%

On the other hand, the total cost received by the central controller is
\begin{align*}
    \sum_m \Cost_m 
    =& \sum_m \sum_{i \in A\setminus A(T)}N_i(T) \sqrt{\frac{(\sqrt{2}-\sqrt{3/2})^4 \log T}{128(1+\sqrt{B})^2 N_i(T)}}\\
    =& \sum_{i=2}^N M N_i(T) \sqrt{\frac{(\sqrt{2}-\sqrt{3/2})^4 \log T}{128(1+\sqrt{B})^2 N_i(T)}}\\
    =& \Omega\left(M\sum_{i=2}^N \sqrt{\log TN_i(T)}\right).
\end{align*}


%
Although with fewer agents, as seen in Figure \ref{Figure_3}, the central controller might struggle to profit, potentially limiting our mechanism's practicality. However, the overall compensation will be smaller than the overall cost when there are a sufficient number of agents and $T$ is large enough. 
In this case, the central controller can also benefit from making the platform practical in reality.

Note that the conclusion that our incentive mechanism achieves IR is an ex-ante expectation, which is reasonable in the field of algorithm design \cite{hammond1981ex,farina2018ex,babichenko2021bayesian}. 

Specifically, it is an anticipation of the outcomes or results based on available information and analysis prior to the actual occurrence. We focus on this ex-ante expectation since rational agents only participate in cooperation when they anticipate benefiting from it.

\section{Experiments}\label{Section_Experiment}

\balance






In this section, we present experimental results for our Balanced-ETC algorithm and incentive mechanism.
Specifically, in our experiments in main text, there are 10 arms with an expected reward vector
$\mu = [0.9, 0.8, 0.7, 0.6, 0.5, 0.4, 0.3, 0.2, 0.1, 0]$.
As for the information sharing structure, we assume that every agent is only willing to share the information of \emph{one} arm, and consider two settings: the balanced setting and the imbalanced setting. 
In the balanced setting (Figure \ref{Figure_1} and \ref{Figure_3}), $|S_i|$ is the same for all arm $i$, i.e., the number of agents who are willing to share the information of any arm is the same. 
In the imbalanced setting (Figure \ref{Figure_2}), $|S_1| = |S_2| = 1$, while the other arms' $|S_i|$ is the same, i.e., only few agents are willing to share the good arms. 
All these results take an average over 100 independent runs.

In Figure \ref{Figure_1} and Figure \ref{Figure_2}, we set $T = 10^6$, $B = 1$, and compare the performance of Balanced-ETC under different number of agents. 
We can observe that in both the balanced and imbalanced settings, the overall regret (green line) doesn't increase significantly as the number of agents grows. Hence, the \emph{average} individual regret (blue line) keeps decreasing and tends to be zero when there are more agents involved. This accords with our analysis, and demonstrates the effectiveness of our collaboration system.
However, as we can see, if we do not apply any incentive mechanism, then the max original individual regret (red line) can be larger than the regret of running the 2-UCB policy alone (black line), especially when there are few agents involved or when the sharing structure is imbalanced. 
After adding our incentive mechanism (with compensation $\Com_m$ and cost $\Cost_m$), the maximum incentive individual regret (red imaginary line) is always lower than the regret of running the 2-UCB policy alone (black line), and becomes almost $0$. 
%
This also accords with our analysis, and demonstrates that our incentive mechanism can achieve IR, i.e., 
it makes sure that every agent who joins this collaboration system can benefit. 
In Figure \ref{Figure_3}, we set $T = 10^6$, $B = 1$, and compare the profits of the central controller (i.e., $\sum_m \Cost_m - \sum_m \Com_m$) under different number of agents. 
We can see that the profit increases linearly in the number of agents, and 
 surpasses 0 at around 4k agents.
%
This also accords with our analysis, and empirically shows that the central controller can also benefit from making the platform practical in reality, as long as there are sufficient agents participating.
In addition to the balanced and imbalanced settings, we also designed experiments under a random setting, with randomized reward distributions and information-sharing structures, to test the robustness of our algorithm. The results demonstrate that our algorithm continues to perform effectively in random setting. Due to space limitations, these experiments are detailed in Appendix \ref{appendix I}.

\section{Conclusion}

In this paper, we propose the LSI-MAMAB model, and design the Balanced-ETC algorithm and a corresponding incentive mechanism. This is the first work on MAMAB
problem with limited information sharing, which sheds light on many collaborative learning scenarios with data sharing constraints.
We show that our algorithm's overall regret is asymptotically optimal and our incentive mechanism achieves individual rationality both theoretically and empirically.

We believe that further research can build on the current work to inspire deeper exploration of limited shared information structures. For example, considering the trade-offs between the costs of information exchange and the effectiveness of cooperation, or modeling heterogeneous user preferences. There remain significant challenges in developing learning frameworks that provide users with greater autonomy and better privacy protection.

\begin{acks}
The work of Junning Shao and Zhixuan Fang is supported by Tsinghua University Dushi Program and Shanghai Qi Zhi Institute Innovation Program SQZ202312.
The work of Siwei Wang is supported in part by the National Natural Science Foundation of China Grant 62106122.
\end{acks}


\bibliographystyle{ACM-Reference-Format} 
\bibliography{sample}

\newpage
\onecolumn
\begin{appendices}
\section{Proof of Lemma \ref{event1}} \label{appendix A}

We first define the good event as
\begin{align*}
 \mathcal{E} = \left\{ \forall t \le T, \forall i \in A, | \hat{\mu}_i(t) - \mu_i| \le \sqrt{3\log T\over 2N_i(t)}\right\},
 \end{align*}
 i.e., for any time step $t$ and any arm $i$, the gap between its empirical mean and its real mean is less than $\sqrt{3\log T\over 2N_i(t)}$. 
 %
 %
%
 
\noindent\textbf{Lemma \ref{event1}.}
    The probability of $\mathcal{E}$ happens satisfies
\begin{equation*}
    \Pr(\mathcal{E}) \ge 1 - \frac{2N}{T}.
\end{equation*}

\begin{proof}
Using the Hoeffding inequality \cite{hoeffding1994probability}, we obtain
\begin{align*}
    \Pr( \neg \mathcal{E}) &= \Pr \left [ \exists t \le T,\exists i \in A, | \hat{\mu}_i(t) - \mu_i| > \sqrt{3\log T\over 2N_i(t)} \right]\\
    &\le \sum_{i\in A}\sum_{t=1}^T \Pr\left[ | \hat{\mu}_i(t) - \mu_i| > \sqrt{3\log T\over 2N_i(t)} \right]\\
    &\le \sum_{i\in A}\sum_{t=1}^T \sum_{\tau = 1}^{t-1} \Pr\left[ | \hat{\mu}_i(t) - \mu_i| > \sqrt{\frac{3\log T}{2\tau}}, N_i(t) = \tau \right]\\
    &\le \sum_{i\in A}\sum_{t=1}^T \sum_{\tau = 1}^{t-1} \frac{2}{T^3} \\
    &\le \frac{2N}{T}.
\end{align*}
    
\end{proof}



\section{Proof of Lemma \ref{suboptimal}}\label{appendix B}


\textbf{Lemma \ref{suboptimal}.} 
When event $\mathcal{E}$ happens, the optimal arm $1$ will never be eliminated. Moreover, $\forall i \in [2, N]$,  the number of explore steps pulling arm $i$ could be bounded by:
    $$ N_i(T) \le  \lceil \frac{8 \log T(1+\sqrt{B})^2}{\Delta_i^2} \rceil.$$

\begin{proof}
When event $\mathcal{E}$ happens, for any sub-optimal arm $i$, it holds that 
\begin{eqnarray*}
\hat{\mu}_1(t) + \rad_1(t) &\ge & \hat{\mu}_1(t) + \sqrt{3\log T\over 2N_i(t)}\\
&\ge& \mu_1\\
&\ge& \mu_i\\
&\ge& \hat{\mu}_i(t) - \sqrt{3\log T\over 2N_i(t)}\\
&\ge& \hat{\mu}_i(t) - \rad_i(t).
\end{eqnarray*}
Thus, the optimal arm $1$ will never be eliminated.

Then for sub-optimal arm $i$, we prove the upper bound of $N_i(T)$ by contradiction.  Further, when 
    $$N_i(T) > \lceil \frac{8 \log T(1+\sqrt{B})^2}{\Delta_i^2} \rceil,$$
    it means $\exists t\in[T], \exists m \in [M], N_i(t) = \lceil \frac{8 \log T(1+\sqrt{B})^2}{\Delta_i^2} \rceil, \pi_m(t) = i$. According to our algorithm, we have $N_i(t)/N_1(t)\le B_m(t) \le B$. 
    Hence, under event $\mathcal{E}$, it holds that
    \begin{align*}
        \hat{\mu}_i(t) + \rad_i(t) & \le \mu_i + 2\rad_i(t) \\
        & \le \mu_1 -\Delta_i + 2\rad_i(t) \\
        & \le \hat{\mu}_1(t) +\rad_1(t) +2\rad_i(t) - \Delta_i\\
        & \le \hat{\mu}_1 - \rad_1(t) + 2\rad_1(t) +2\rad_i(t) - \Delta_i.
    \end{align*}
    However, when $N_i(t) \ge \lceil \frac{8 \log T(1+\sqrt{B})^2}{\Delta_i^2} \rceil$, we know that $N_1(t) \ge N_i(t) / B$, which implies $2\rad_1(t) +2\rad_i(t) - \Delta_i \le 0$. 
    Therefore, it holds that
    $$\hat{\mu}_i(t) + \rad_i(t) \le \hat{\mu}_1(t) - \rad_1(t),$$
    which means arm $i$ is eliminated by Eq. (\ref{Eq_1}).
    This leads to a contradiction and thus proves the lemma.
\end{proof}

\section{Proof of Lemma \ref{lemma_number_explore}} \label{appendix C}

\textbf{Lemma \ref{lemma_number_explore}.}
For $\forall t \in [T]$ and $\forall i \in A(t)$, we have 
$$N_i(t) \ge \frac{t}{N} - 1.$$

\begin{proof}
We use induction to prove this lemma.

Base case: For $t = 1, ..., N$, the statement holds.

Induction step: Assume the statement holds for $\forall t < t'$. It means $\min_{j\in [N]}{N_j(t)} \ge \frac{t}{N} - 1$.

In our algorithm, at least one active arm with the smallest number $N_j(t)$ will be pulled in each time step. It means that for $\forall t \in [t', t'+N]$, we have
\begin{align*}
\min_{j\in [N]}{N_j(t)} &\ge \min_{j\in [N]}{N_j(t-N)} + 1 \\
&\ge \frac{t-N}{N} - 1 + 1 \ge \frac{t}{N} - 1   
\end{align*}

That is, the statement also holds for $\forall t \in [t', t'+N]$, establishing the induction step.
%
\end{proof}

\section{Proof of Lemma \ref{Lemma_Commit}}\label{appendix D}

\textbf{Lemma \ref{Lemma_Commit}.} When event $\mathcal{E}$ happens, the expected regret in commit steps can be bounded by
$\frac{4eMN^2}{\Delta_{\min}} $.

\begin{proof}
Let $R^c(T)$ be the expected regret in commit steps, we have
\begin{align*}
    \E[R^c(T)] & \le \E\left[\sum_{m=1}^M\sum_{t=1}^T \sum_{i=2}^N \Delta_i \I\{\pi_{m}(t) = i\}\right]\\
            & \le \E\left[\sum_{m=1}^M\sum_{t=1}^T \sum_{i=2}^N \Delta_i \I\{\hat{\mu}_i(t) > \hat{\mu}_1(t)\}\right] \\
            & \le \E\left[\sum_{i=m}^M\sum_{t=1}^T \sum_{i=2}^N \Delta_i \left(\I\{\hat{\mu}_1(t) - \mu_1 \le -\Delta_i/2\} + \I\{\hat{\mu}_i(t) - \mu_i > \Delta_i/2\}\right) \right].
\end{align*}
Using the Hoeffding inequality \cite{hoeffding1994probability}, we have
$$
\Pr(\hat{\mu}_1(t) - \mu_1 \le -\Delta_i/2) \le e^{-N_i(t)\Delta_i^2/2}  \le e^{-(\frac{t}{N} - 1) \Delta_i^2/2};
$$
$$
\Pr(\hat{\mu}_i(t) - \mu_i > -\Delta_i/2) \le e^{-N_i(t)\Delta_i^2/2} \le e^{-(\frac{t}{N} - 1) \Delta_i^2/2}.
$$
The we can obtain that
\begin{align*}
        \E[R^c(T)] & \le \E\left[\sum_{m=1}^M\sum_{t=1}^T \sum_{i=2}^N \Delta_i (\I\{\hat{\mu}_1(t) - \mu_1 \le -\Delta_i/2\} + \I\{\hat{\mu}_i(t) - \mu_i > \Delta_i/2\})\right]\\
        & \le \E\left[\sum_{m=1}^M \sum_{i=2}^N \sum_{t=1}^T  \Delta_i 2e^{-(\frac{t}{N} - 1) \Delta_i^2/2} \right]\\
        & \le \E\left[\sum_{m=1}^M\sum_{i=2}^N \Delta_i \frac{4eN}{\Delta_i^2} \right] \le \frac{4eMN^2}{\Delta_{\min}}.
\end{align*}
\end{proof}

\section{Proof of Theorem \ref{Th1}} \label{appendix Th1}
\textbf{Theorem \ref{Th1}.}
The overall regret of Balanced-ETC can be upper bounded by:
\begin{equation*}
R(T) < \sum_{i = 2}^N  \frac{8(1+\sqrt{B})^2 \log T}{\Delta_{i}} + \frac{4eMN^2}{\Delta_{\min}} + 2MN.
\end{equation*}
\begin{proof} We first define a good event
\begin{align*}
 \mathcal{E} = \left\{ \forall t \le T, \forall i \in A, | \hat{\mu}_i(t) - \mu_i| \le \sqrt{3\log T\over 2N_i(t)}\right\},
 \end{align*}

 i.e., for any time step $t$ and any arm $i$, the gap between the empirical mean and the real mean is less than $\sqrt{3\log T\over 2N_i(t)}$.

Based on Lemma \ref{event1}, we can bound the regret when $\mathcal{E}$ does not happen as 
\begin{equation*}
    \E[R(T) \I[\neg \mathcal{E}]] \le MT \cdot \Pr(\neg\mathcal{E}) \le 2MN.
\end{equation*}
Then we come to bound the regret when $\mathcal{E}$ happens.

Based on Lemma \ref{suboptimal}, we prove that under event $\mathcal{E}$, the expected regret in the explore steps can be upper bounded by 
    $\sum_{i = 2}^N  \frac{8(1+\sqrt{B})^2 \log T}{\Delta_{i}}$.

Based on Lemma \ref{Lemma_Commit}, we can upper bound the expected regret of commit steps under event $\mathcal{E}$ by $\frac{4eMN^2}{\Delta_{\min}}$. 

Summing over the expected regret in the explore steps and in commit steps when event $\mathcal{E}$ happens, as long as the expected regret when when event $\mathcal{E}$ does not happen, we finally get the regret upper bound in Theorem \ref{Th1}. 
\end{proof}

\section{Proof of Theorem \ref{Theorem_Min_Info}}\label{appendix E}
\textbf{Theorem \ref{Theorem_Min_Info}.} For any algorithm in the LSI-MAMAB model, if the number of shared arm-reward pairs is $o(\log T)$, then the overall regret is lower bounded by $\Omega(MN \log T)$.
\begin{proof}
    Existing analysis \cite{lai1985asymptotically} proves the following asymptotic lower bound in classic bandit case: 
\begin{equation}
    \lim_{T\to \infty}\inf\frac{\E[N_i(T)]}{\log(T)} \ge \frac{1}{KL(D_i, D_1)},
\end{equation}
where $N_i(T)$ is the number of times that we pull arm $i$ until time step $T$, and $KL(D_i, D_1)$ is the KL-divergence between two reward distributions. 

Thus, in our LSI-MAMAB model, each agent must observe (either by pulling the arm herself or by receiving the shared information from the others) the information of each arm for at least $\Omega(\log T)$ times. 
If the number of shared arm-reward pairs is $o(\log T)$, then each agent $m$ needs to pull every sub-optimal arm $i$ for at least $\Omega(\log T)$ times.
Hence her individual regret is at least $\Omega(N \log T)$, and the overall regret is at least $\Omega(MN \log T)$.

\end{proof}

\section{Proof of Lemma \ref{range}} \label{appendix F}
\textbf{Lemma \ref{range}.} When event $\mathcal{E}$ happens and time horizon $T$ is large enough such that $\frac{T - 2N}{\log T} > \frac{8(1+\sqrt{B})^2N}{\Delta_{\min}^2}$ (which means that all the sub-optimal arms must be eliminated), for all sub-optimal arms $i \in [2,N], \Delta_i$ can be bounded by:
\begin{equation*}
    \sqrt{(\sqrt{2}-\sqrt{3/2})^2\log T \over N_i(T)} \le \Delta_i \le \sqrt{\frac{8(1+\sqrt{B})^2\log T}{N_i(T)}}.
\end{equation*}
\begin{proof}
 The proof of Lemma \ref{range} is quite straightforward: the second inequality could be obtained by Lemma \ref{suboptimal} directly. 
As for the first inequality, note that under event $\mathcal{E}$, 
\begin{equation*}
    \max_{j\in A(t)} \hat{\mu}_j(t) - \rad_j(t) \le \max_{j\in A(t)} \mu_j \le \mu_1.
\end{equation*}
Hence, we cannot eliminate arm $i$ from the active arm set if $\hat{\mu}_i(t) + \rad_i(t) > \mu_1$.
When $\sqrt{(\sqrt{2}-\sqrt{3/2})^2\log T \over N_i(t)} > \Delta_i$, we know that 

\begin{equation*}
    \hat{\mu}_i(t) + \rad_i(t) \ge \mu_i + (\sqrt{2} - \sqrt{3/2})\sqrt{\log T\over N_i(t)} \ge \mu_i + \Delta_i = \mu_1,
\end{equation*}
which means that arm $i$ cannot be eliminated.
Therefore, since we eliminate arm $i$ from the active arm set at the end of the game, we must have that$\sqrt{(\sqrt{2}-\sqrt{3/2})^2\log T \over N_i(T)} \le \Delta_i$.   
\end{proof}

\section{Proof of Lemma \ref{range_ucb}}\label{appendix G}

\begin{lemma}\label{range_ucb}
Consider we are running the 2-UCB policy in a single-agent system.
Let $N_i'(t)$ be the number of pulls on arm $i$ until time step $t$, $\hat{\mu}_i'(t)$ be the empirical mean and $\mathcal{E}'$ be the event that
\begin{align*}
 \mathcal{E}' = \left\{ \forall t \le T, \forall i \in A, | \hat{\mu}_i'(t) - \mu_i| \le \sqrt{3\log T\over 2N_i'(t)}\right\}.
 \end{align*}
Then if event $\mathcal{E}'$ happens and time horizon $T$ is large enough such that $\frac{T}{\log^2 T} > \frac{N}{4\Delta_{\min}^4}$, for all sub-optimal arms $i \in [2,N]$, we have that 
\begin{equation*}
    \Delta_i \ge \sqrt{(\sqrt{2}-\sqrt{3/2})^2\log T \over 4N_i'(T)}. 
\end{equation*}
\end{lemma}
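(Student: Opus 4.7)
I plan to prove this lemma by contradiction, closely paralleling the structure of the proof of Lemma \ref{range} but adapted to the UCB selection rule. Under event $\mathcal{E}'$, assume toward a contradiction that $N_i'(T) < \frac{(\sqrt{2}-\sqrt{3/2})^2 \log T}{4\Delta_i^2}$ for some sub-optimal arm $i$. The extra factor of $4$ in the denominator (compared to the corresponding bound in Lemma \ref{range}) is precisely the slack that this argument will exploit: it will boost the useful threshold from $\mu_1$ to $\mu_1 + \Delta_i$.

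The first step is to show that under the hypothesis and event $\mathcal{E}'$, the UCB index of arm $i$ at any round $t$ with $N_i'(t-1) \ge 1$ is at least $\mu_1 + \Delta_i$. Since $N_i'(t-1) \le N_i'(T)$, the hypothesis gives $\sqrt{\log T / N_i'(t-1)} > 2\Delta_i/(\sqrt{2}-\sqrt{3/2})$. Combining this with the lower confidence bound $\hat{\mu}_i'(t-1) \ge \mu_i - \sqrt{3\log T/(2N_i'(t-1))}$ from $\mathcal{E}'$ yields
\[
\hat{\mu}_i'(t-1) + \sqrt{\frac{2\log T}{N_i'(t-1)}} \ge \mu_i + (\sqrt{2}-\sqrt{3/2})\sqrt{\frac{\log T}{N_i'(t-1)}} \ge \mu_i + 2\Delta_i = \mu_1 + \Delta_i.
\]
Since 2-UCB always pulls the arm with the largest index, every selected arm $\pi(t) = j$ must satisfy $\hat{\mu}_j'(t-1) + \sqrt{2\log T/N_j'(t-1)} \ge \mu_1 + \Delta_i$. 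For $j \ne i$, applying the upper confidence bound from $\mathcal{E}'$ then gives $N_j'(t-1) \le (\sqrt{2}+\sqrt{3/2})^2\log T/(\Delta_j+\Delta_i)^2$, and hence $N_j'(T) \le \frac{(\sqrt{2}+\sqrt{3/2})^2 \log T}{(\Delta_j+\Delta_i)^2} + 1$ (with the convention $\Delta_1 = 0$).

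Summing the bounds across all arms gives
\[
T = N_i'(T) + \sum_{j \ne i} N_j'(T) \le \frac{(\sqrt{2}-\sqrt{3/2})^2\log T}{4\Delta_i^2} + \sum_{j \ne i}\left(\frac{(\sqrt{2}+\sqrt{3/2})^2\log T}{(\Delta_j+\Delta_i)^2} + 1\right) = O\!\left(\frac{N\log T}{\Delta_{\min}^2}\right).
\]
The lemma's hypothesis $T/\log^2 T > N/(4\Delta_{\min}^4)$ rewrites as $T > \frac{N\log T}{4\Delta_{\min}^2}\cdot\frac{\log T}{\Delta_{\min}^2}$, so once $\log T/\Delta_{\min}^2$ exceeds the absolute constant hidden in the $O(\cdot)$, we obtain a contradiction. \emph{The main obstacle} will be pinning down these explicit constants—in particular the factor $(\sqrt{2}+\sqrt{3/2})^2$—so that the stated threshold $N/(4\Delta_{\min}^4)$ is genuinely sufficient to force the contradiction, rather than merely working asymptotically. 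A secondary concern is cleanly handling the initial rounds where some $N_j'(t-1) = 0$ and UCB is forced by an infinite index to pull that arm, which is easily absorbed into the additive $+1$ terms in the per-arm bound.
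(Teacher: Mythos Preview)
Your high-level strategy is sound and similar in spirit to the paper's---both argue by contradiction from $N_i'(T)<L_i$ and both exploit that under $\mathcal{E}'$ the UCB index of the under-pulled arm stays at least $\mu_1+\Delta_i$. The difference is in how the contradiction is extracted, and this is exactly where your ``main obstacle'' bites.

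Your plan sums the per-arm upper bounds $N_j'(T)\le \frac{(\sqrt{2}+\sqrt{3/2})^2\log T}{(\Delta_j+\Delta_i)^2}+1$ over \emph{all} $j$ and compares to $T$. The dominant term (from $j=1$) is $\frac{(\sqrt{2}+\sqrt{3/2})^2\log T}{\Delta_i^2}$, so the sum is at best $\Theta\!\big(\tfrac{N\log T}{\Delta_{\min}^2}\big)$ with leading constant roughly $(\sqrt{2}+\sqrt{3/2})^2\approx 7$. To contradict $T>\tfrac{N\log^2 T}{4\Delta_{\min}^4}$ you then need $\tfrac{\log T}{\Delta_{\min}^2}$ to exceed that constant---an assumption \emph{not} present in the lemma. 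So as stated, your route proves only a version with a stronger (or additional) hypothesis on $T$; the threshold $\tfrac{N}{4\Delta_{\min}^4}$ cannot be recovered by a global sum.

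The paper avoids this by a pigeonhole/block argument. It partitions $[0,T]$ into $L_i$ blocks of length $T/L_i$; since $N_i'(T)<L_i$, some block $[t_1,t_2]$ contains no pull of arm $i$, so the UCB of arm $i$ is frozen at $\ge \mu_1+\Delta_i$ throughout. By averaging, some arm $j$ gains at least $\tfrac{T}{NL_i}$ pulls inside this block; the hypothesis $\tfrac{T}{\log^2 T}>\tfrac{N}{4\Delta_{\min}^4}$ is \emph{precisely} what makes $\tfrac{T}{NL_i}>\tfrac{4(\sqrt{2}+\sqrt{3/2})^2\log T}{\Delta_i^2}$, because $(\sqrt{2}+\sqrt{3/2})^2(\sqrt{2}-\sqrt{3/2})^2=(2-\tfrac32)^2=\tfrac14$. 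At the moment $j$ is pulled with that many in-block samples, its UCB is at most $\mu_1+\Delta_i/2$, contradicting the UCB selection rule. Thus the paper's block device is not cosmetic: it is what converts the identity above into exactly the constant $1/4$ appearing in the threshold, without any residual ``$\log T/\Delta_{\min}^2$ large'' requirement.
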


\begin{proof}
    We prove the lemma by contradiction. For any arm $i$, denote $L_i = {(\sqrt{2}-\sqrt{3/2})^2\log T \over 4\Delta_i^2}$. 

    If event $\mathcal{E}'$ happens, but 
    $\exists k \in [N], N_k'(T) < L_k$. 
    Then, we divide $[0, T]$ into $L_k$ blocks with length $\frac{T}{L_k}$. By the pigeonhole principle, there must exist one block $[t_1, t_2]$, in which arm $k$ is not pulled, i.e. , $N_k'(t_1) = N_k'(t_2) < L_k$.

When event $\mathcal{E}'$ happens, we have $\forall t \in [t_1, t_2]$
\begin{align*}
    \hat{\mu}_k(t) + \rad_k(t) &= \hat{\mu}_k(t) + \sqrt{\frac{3\log T}{2N_k'(t)}} + (\sqrt{2} - \sqrt{3/2})\sqrt{\frac{\log T}{N_k'(t)}}\\
    &\ge \mu_k + (\sqrt{2} - \sqrt{3/2})\sqrt{\frac{\log T}{N_k'(t)}}\\
    &\ge \mu_k + (\sqrt{2} - \sqrt{3/2})\sqrt{\frac{\log T}{L_k}}\\
    &\ge \mu_k + 2\Delta_k \\
    &\ge \mu_1 + \Delta_k.
\end{align*}

When $\frac{T}{\log^2 T} > \frac{N}{4\Delta_{\min}^4}$, it holds that 
$$\exists i \in [N], N_i'(t_2) - N_i'(t_1) > \frac{4(\sqrt{3/2}+\sqrt{2})^2\log T}{\Delta_k^2}.$$
Otherwise, we have 
\begin{align*}
    t_2 - t_1 &= \sum_{j=1}^N N_j'(t_2) - N_j'(t_1)\\
    &\le \sum_{j=1}^N \frac{4(\sqrt{3/2}+\sqrt{2})^2\log T}{\Delta_k^2} \\
    & \le   \frac{4N(\sqrt{3/2}+\sqrt{2})^2\log T}{\Delta_{\min}^2}\\
    & < \frac{4T\Delta_{\min}^2}{(\sqrt{2}-\sqrt{3/2})^2 \log T}\\
    & < \frac{4T\Delta_{k}^2}{(\sqrt{2}-\sqrt{3/2})^2 \log T}\\
    & < \frac{T}{L_k},
\end{align*}
which contradicts with $t_2 - t_1 = \frac{T}{L_k}$.

Then for this arm $i$, it holds that $\exists t_3 \in [t_1, t_2], N_i'(t_3) - N_i'(t_1) = \frac{4(\sqrt{3/2}+\sqrt{2})^2\log T}{\Delta_k^2}$ and arm $i$ is pulled at time $t_3$. For arm $i$ and time $t_3$, we have
\begin{align*}
    \hat{\mu}_i(t_3) + \rad_i(t_3) &\le \mu_i(t_3) + \sqrt{\frac{3\log T}{2N_i'(t_3)}} + \sqrt{\frac{2\log T}{N_i'(t_3)}}\\
    &\le \mu_1 + (\sqrt{2} + \sqrt{3/2})\sqrt{\frac{\log T}{N_i'(t_3)}}\\
    &\le \mu_1 + (\sqrt{2} + \sqrt{3/2})\sqrt{\frac{\log T}{N_i'(t_3) - N_i'(t_1)}}\\
    &\le \mu_1 + \frac{\Delta_k}{2}.
\end{align*}

However, this makes a contradiction with $\hat{\mu}_k(t) + \rad_k(t) \ge \mu_1 + \Delta_k$, since the 2-UCB algorithm always chooses the arm with the largest UCB to pull. 
%
%
%
\end{proof}

\section{Proof of Theorem \ref{Theorem_IR}} \label{appendix H}
\textbf{Theorem \ref{Theorem_IR}.}
    When event $\mathcal{E}$ happens and time horizon $T$ is large enough such that $\frac{T}{\log^2 T} > \frac{N}{4\Delta_{\min}^4}$, applying Eq. \eqref{Eq_5} and Eq. \eqref{Eq_6} as the incentive mechanism in our Balanced-ETC algorithm achieves IR, i.e., for any agent $m$,
    \begin{equation*}
        R_m(T) - \Com_m + \Cost_m - R_{UCB}(T) \le 0. 
    \end{equation*}

\begin{proof}
Firstly, for any agent $m$, once she pulls a sub-optimal arm $i$ (no matter in an explore step or in a commit step), she will suffer from a regret of $\Delta_i$, and receive a compensation of $\sqrt{\frac{8(1+\sqrt{B})^2\log T}{N_i(T)}}$.
Hence, by Lemma \ref{range}, under event $\mathcal{E}$, we could obtain that 
\begin{equation}\label{Eq_10}
    R_m(T) \le \Com_m.
\end{equation}

Secondly, by Lemma \ref{range_ucb}, we know that with high probability, $N_i'(T) \ge {(\sqrt{2}-\sqrt{3/2})^2\log T \over 4\Delta_i^2}$. Hence
\begin{eqnarray*}
    R_{UCB}(T) = \sum_{i=2}^N N_i'(T)\Delta_i \ge \sum_{i=2}^N {(\sqrt{2}-\sqrt{3/2})^2\log T \over 4\Delta_i}.
\end{eqnarray*}
On the other hand, 
\begin{eqnarray*}
    \Cost_m &=& \sum_{i=2}^N N_i(T) \sqrt{\frac{(\sqrt{2}-\sqrt{3/2})^4 \log T}{128(1+\sqrt{B})^2 N_i(T)}}\\
    &\le& 
    \sum_{i=2}^N {(\sqrt{2}-\sqrt{3/2})^2\log T \over 4\Delta_i},
\end{eqnarray*}
where the last inequality holds because of Lemma \ref{range}. 
%
This means that 
\begin{equation}\label{Eq_11}
    \Cost_m \le R_{UCB}(T).
\end{equation}
Along with Eq. \eqref{Eq_10}, we finish the proof of Theorem \ref{Theorem_IR}.
\end{proof}

\section{Experimental details.} \label{appendix I}
The type of compute worker used in our experiment is CPU. Our experimental environment is a machine with 96 Intel(R) Xeon(R) Gold 5220R CPUs @ 2.20GHz, with an x86\_64 architecture.
\subsection{Regret/Time experiments} 
\begin{figure}[h]
    \centering
    \includegraphics[scale = 0.5]{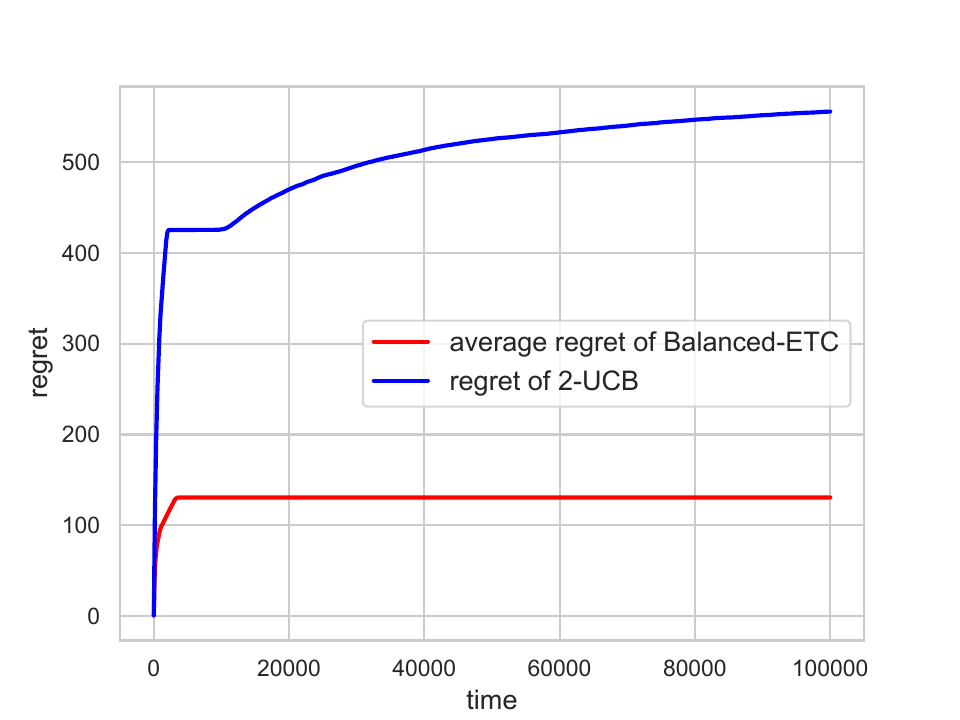}
    \caption{regret with T}
    \label{fig:enter-label}
\end{figure}
To demonstrate that our algorithm can still function properly when one person shares multiple arms and to show the relationship between regret and T, we conduct the following Regret/Time experiments. In this experiments, there are 20 agents and 10 arms with an expected reward vector $\mu = [0.95, 0.85, 0.75, 0.65, 0.55, 0.45, 0.35, 0.25, 0.15, 0.05]$. we set $T = 10^5$, $B = 1$ and each agent shares multiple arms (more than one arm). Specifically, For agent m, her non-sensitive arm set is $\{m \bmod 10, (m + 1) \bmod 10\}$. All these results take an average over 100 independent runs.
We can see that the average regret of Balanced-ETC is significantly less than the regret of 2-UCB, which means our algorithm can effectively reduce the overall regret.

\subsection{Experimental results of random setting}
To enhance the experimental section, we design experiments involving randomized reward distributions and randomized information-sharing structures. These experiments are more practical in nature and aim to validate our theoretical findings. In the setting we considered, the expected reward for each arm is a \textbf{random} value within the interval [0, 1], and the arm that each agent is willing to share is \textbf{randomly} selected. Under the condition that each arm has at least one agent willing to share, we conducted 100 experiments and averaged the results.

\begin{figure}[h]
    \centering
    \includegraphics[scale = 0.5]{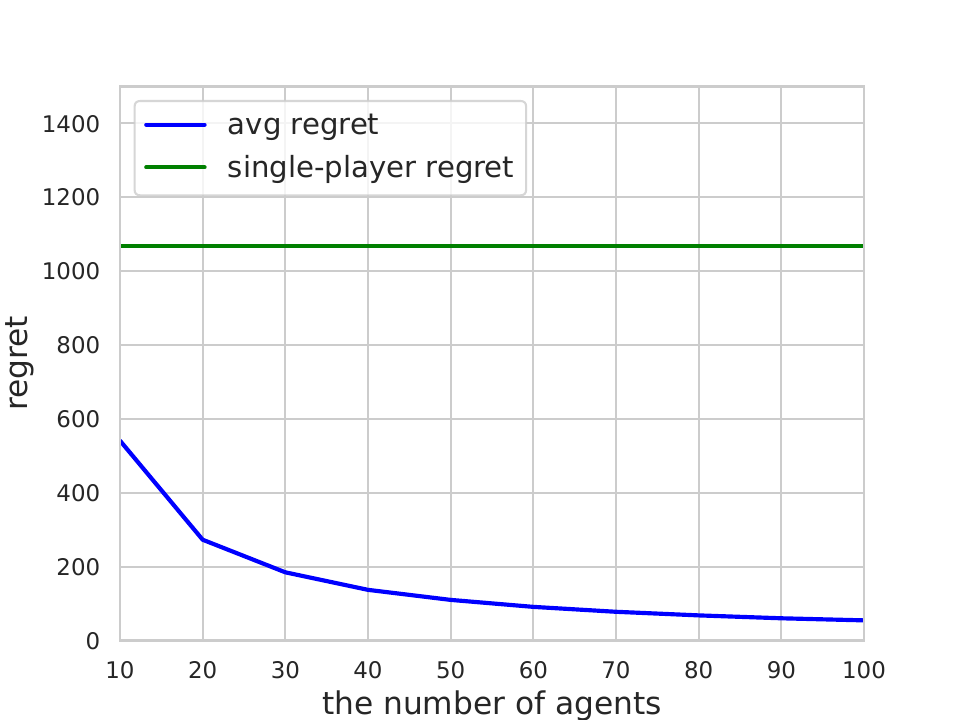}
    \caption{experimental results of random setting}
    \label{fig:enter-label}
\end{figure}

Our experiments confirm our previous findings, showing that the average individual regret (blue
line) keeps decreasing and tends to be zero when there are more
agents involved. This indicates that our algorithm continues to perform effectively in a more practical environment.
\end{appendices}
\end{document}